
\documentclass[journal]{IEEEtran}

\usepackage{graphicx}
\usepackage{amsthm}
\usepackage{amssymb,latexsym,multicol,caption}     
\usepackage{algorithm}  
\usepackage{algorithmic}
\usepackage{epsfig} 
\usepackage{epstopdf}
\usepackage[colorlinks, linkcolor=blue, anchorcolor=red, citecolor=red, urlcolor=black]{hyperref}
\usepackage{color}
\usepackage{multirow}
\usepackage{multicol}
\usepackage{subfigure}
\usepackage{placeins}
\usepackage{bm}
\usepackage{graphics}
\usepackage{footnote}
\usepackage{url}
\usepackage{indentfirst}
\usepackage{longtable}
\usepackage{array}
\usepackage{mdwmath}
\usepackage{mdwtab}
\usepackage{rotating}
\usepackage{color}
\usepackage{morefloats}
\usepackage{slashbox}
\usepackage{mcite}
\usepackage{eqnarray}
\usepackage{cite}
\usepackage{float}
\usepackage{makecell}
\usepackage{amsmath}
\usepackage{threeparttable}
\usepackage{pifont}
\usepackage{cleveref}
\usepackage{balance}
\usepackage{flushend}

\renewcommand{\eqref}[1]{(\ref{#1})}

\newcommand{\algref}[1]{Algorithm \ref{#1}}
\newcommand{\thmref}[1]{Theorem \ref{#1}}

\newcommand{\argmin}{\arg\!\min}

\crefname{equation}{}{}
\crefname{figure}{Fig.}{Figs.}
\crefname{table}{Table}{Tables}
\crefname{section}{Section}{Sections}
\crefname{prop}{Proposition}{Propositions}
\crefname{theorem}{Theorem}{Theorems}
\crefname{lemma}{Lemma}{Lemmas}

\newtheorem{theorem}{Theorem}[section]
\newtheorem{lemma}{Lemma}[section]
\newtheorem{prop}{Proposition}[section]
\newtheorem{definition}{Definition}[section]
\newcommand{\reflemma}[1]{Lemma \ref{#1}}

\newcommand{\vc}[1]{{\textbf{v}}{(#1)}}
\newcommand{\tr}[1]{\textbf{Tr}{(#1)}}

\ifCLASSINFOpdf
\else
\fi

\hyphenation{op-tical net-works semi-conduc-tor}

\begin{document}

\title{Two-Dimensional Semi-Nonnegative Matrix Factorization for Clustering}
%


\author{Chong~Peng, Zhilu Zhang, 
        Zhao~Kang, Chenglizhao~Chen, 
        and~Qiang~Cheng,~\IEEEmembership{Senior~Member,~IEEE}
\thanks{C. Peng and C. Chen are with the College of Computer Science and Technology, Qingdao University, Qingdao, Shandong, 266000, China; Z. Kang is with School of Computer Science and Engineering, University of Electronic Science and Technology
of China, Chengdu, Sichuan 611731, China; Q. Cheng is with Institute of Biomedical Informatics \& Department of Computer Science, University of Kentucky, Lexington, KY 40536, USA. E-mail: (pchong1991@163.com, sckangz@gmail.com, qiang.cheng@uky.edu).}
        
}

%
%

\markboth{Journal of \LaTeX\ Class Files,~Vol.~14, No.~8, August~2015}%
{Shell \MakeLowercase{\textit{et al.}}: Bare Demo of IEEEtran.cls for IEEE Journals}
%



\maketitle

\begin{abstract}
In this paper, we propose a new Semi-Nonnegative Matrix Factorization method for 2-dimensional (2D) data, named TS-NMF. It overcomes the drawback of existing methods that seriously damage the spatial information of the data by converting 2D data to vectors in a preprocessing step. In particular, projection matrices are sought under the guidance of building new data representations, such that the spatial information is retained and projections are enhanced by the goal of clustering, which helps construct optimal projection directions. Moreover, to exploit nonlinear structures of the data, manifold is constructed in the projected subspace, which is adaptively updated according to the projections and less afflicted with noise and outliers of the data and thus more representative in the projected space. Hence, seeking projections, building new data representations, and learning manifold are seamlessly integrated in a single model, which mutually enhance other and lead to a powerful data representation. Comprehensive experimental results verify the effectiveness of TS-NMF in comparison with several state-of-the-art algorithms, which suggests high potential of the proposed method for real world applications.
\end{abstract}

\begin{IEEEkeywords}
Semi-NMF, clustering, 2-dimensional data, spatial information
\end{IEEEkeywords}

%
\IEEEpeerreviewmaketitle

\section{Introduction}
\label{sec_introduction}
Matrix factorization is a powerful way for data representation and has been widely used for many problems in machine learning, data mining, computer vision, and statistical data analysis. Among various factorization algorithms, some have seen widespread successes, such as singular value decomposition (SVD) \cite{duda2012pattern}, and principal component analysis (PCA) \cite{jolliffe2002principal}. 

Recently, a number of relatively new factorization algorithms have been developed to provide improved solutions to some special problems in machine learning \cite{lee1999learning,peng2016fast}. In particular, nonnegative matrix factorization (NMF) \cite{lee1999learning,lee2001algorithms} has drawn considerable attention. NMF represents nonnegative data with nonnegative basis and coefficients, which naturally leads to parts-based representations \cite{lee1999learning}. 
It has been used in many real world applications, such as pattern recognition \cite{li2001learning}, multimedia analysis \cite{cooper2002summarizing}, and text mining \cite{xu2003document}. Recent studies have revealed interesting relationships between NMF and several other methods. For example, spectral clustering (SC) \cite{ng2002spectral} is shown to be equivalent to a weighted version of kernel K-means \cite{dhillon2007weighted} and both of them are particular cases of clustering with NMF under a doubly stochastic constraint \cite{zass2005unifying}; the Kullback-Leibler divergence-based NMF turns out to be equivalent to the probabilistic latent semantic analysis \cite{ding2006nonnegative,hofmann1999probabilistic}, which has been further developed into the fully probabilistic latent Dirichlet allocation model \cite{blei2003latent}.

Semi-NMF extends the repertoire of NMF by removing the non-negativity constraints on the data and basis, which expands the range of applications of NMF. It also strengthens the connections between NMF and K-means \cite{ding2010convex}. It is noted that K-means can be written as a matrix factorization, where the two factor matrices represent the centroids and cluster indicators. Particularly, the centroids can be general whereas the cluster indicators are all nonnegative, which shows the connection between K-means and Semi-NMF. To exploit nonlinear structures of the data, graph-regularized NMF (GNMF) \cite{cai2011graph} and robust manifold NMF (RMNMF) \cite{huang2014robust} incorporate the graph Laplacian to measure nonlinear relationships of the data on manifold. In particular, GNMF including Frobenius-norm and divergence-based formulations, which require the basis and coefficient matrices to be nonnegative; RMNMF removes the constraints on the basis matrix and can be regarded as a variant of Semi-NMF by incorporating a structured sparsity-inducing norm to enhance its robustness.

These methods have been used on 2-dimensional (2D) data such as images, where 2D data are vectorized for further data processing in a preprocessing step. While the vectorization-based Semi-NMF methodology has been growingly useful, it fails to fully exploit the inherent 2D structures and correlations in the 2D data after vectorizing the 2D data. Furthermore, there is empirical evidence that building a model with vectorized high-dimensional features is not effective to filter the noisy or redundant information in the original feature spaces \cite{fu2016tensor}. Besides the way of vectorizing 2D data, tensor based approaches have been proposed. While they may potentially better exploit spatial structures of the 2D data \cite{zhang2015low}, such approaches still have some limitations: They use all features of the data, hence noisy or redundant features may degrade the learning performance. Also, tensor computation and methods usually involve flattening and folding operations, which, more or less, have issues similar to those of vectorization operation and thus might not fully exploit the true structures of the data. Moreover, tensor methods usually suffer from the following major issues: 1) for candecomp/parafac (CP) decomposition based methods, it is generally NP-hard to compute the CP rank \cite{lu2016tensor,kolda2009tensor}; 2) Tucker decomposition is not unique \cite{kolda2009tensor}; 3) the application of a core tensor and a high-order tensor product would incur information loss of spatial details \cite{letexier2008noise}.

To address these limitations, in this paper, we propose a new Semi-NMF-like method for 2D data, where we directly use the original 2D data to help preserve their 2D spatial structures instead of vectorizing them. It is noted that recently there are tensor approaches to retain spatial information for 2D data \cite{cao2013robust,huang2008simultaneous}. However, tensors are usually reduced to matrices for processing. For example, \cite{zhang2015low} organizes different views of the data by a tensor structure, however in each view each sample is still vectorized and the image spatial information is still damaged. In this paper, we directly use 2D inputs whose inherent structure information is emphasized by two projection matrices, which makes our method starkly different from tensor approach. Specifically, we seek optimal projection matrices and building new representations of the data jointly, aiming at enhancing clustering. These projections matrices are optimal in the sense that they project 2D data to the most expressive subspace. Moreover, manifold is taken into consideration to capture nonlinear structures of the data. In our formulation, the manifold is adaptively updated with projection matrices capturing representative information from 2D data, and thus it is less afflicted with noise and outliers. Therefore, this paper seeks optimal projection directions, factors data for new representations, and learns intrinsic manifold structures in a single, seamlessly integrated framework, such that these tasks mutually enhance and lead to improved clustering as well as powerful representations of 2D data. It is noted that, as a special case, our method is applicable to 1-dimensional data. The main contributions of this paper are summarized as follows: 
\begin{itemize}
\item The optimal 2D data projections and an image subspace are sought for learning new representations of the 2D data and clustering 2D matrices.
\item The proposed method is able to retain intrinsic spatial information of 2D data, and alleviate the adverse effect of irrelevant or less important information.
\item Manifold learning is integrated to enhance the capability of exploiting nonlinear structures of the data. The manifold is adaptively updated according to the 2D projections that capture the most expressive information from the data, and the graph is less afflicted with irrelevant or grossly corrupted features.
\item The proposed model enables 2D feature extraction, adaptive manifold learning, and matrix factorization jointly, thus offering a powerful data representation ability.
\item An efficient optimization algorithm is developed with provable mathematical analysis; extensive experimental results verify the effectiveness of the proposed model and algorithm.
\end{itemize}

The rest of this paper is organized as follows. We review related work in \cref{sec_RelatedWork}. Then we present the proposed model in \cref{sec_proposed} and its optimization in \cref{sec_optimization}. We conduct extensive experiments and show the results in \cref{sec_experiments}. Finally, we conclude this paper in \cref{sec_conclusion}.

\section{Related Work}
\label{sec_RelatedWork}

\subsection{Semi-NMF}
\label{sec_related_nmf}
Given data $Y\in\mathcal{R}^{d\times n}$ with $d$ being the dimension of the data and $n$ being the number of samples, the objective of Semi-NMF is
\begin{equation}
\label{eq_obj_seminmf}
\min_{U,V} \|Y-UV^T\|_F^2, \quad s.t. \quad v_{ij}\ge 0,
\end{equation}
where $U = [u_{ij}]= [u_1, u_2, \cdots, u_k]\in\mathcal{R}^{d\times k}$ contains basis in columns and $V = [v_{ij}]=[v_1, v_2, \cdots, v_n] \in\mathcal{R}^{k\times n}$ are the new representations of the data in rows.

\subsection{Graph Laplacian}
\label{sec_related_gnmf}
Graph Laplacian \cite{chung1997spectral} is widely used to incorporate the intrinsic geometrical structure of the data on manifold. In particular, the manifold enforces the smoothness of the data in linear and nonlinear spaces by minimizing
\begin{equation}
\label{eq_manifold}
\begin{aligned}
	&	\frac{1}{2}\sum_{i=1}^{n}\sum_{j=1}^{n} \|v_i-v_j\|_2^2 w_{ij} \\
= 	&	\sum_{j=1}^{n} d_{jj} v_j^T v_j - \sum_{i=1}^{n}\sum_{j=1}^{n} w_{ij} v_i^T v_j,	\\ 
=	& 	\textbf{Tr}(V D V^T) - \textbf{Tr}(V W V^T)	= \textbf{Tr}(V LV^T),
\end{aligned}
\end{equation}
where $\tr{\cdot}$ is the trace operator, $W = [w_{ij}]$ is the weight matrix that measures the pair-wise similarities of original data points, $D = [d_{ij}]$ is a diagonal matrix with $d_{ii} = \sum_{j}w_{ij}$, and $L=D-W$. It is seen that by minimizing \cref{eq_manifold}, we can have a natural effect that if two data points are close in the intrinsic geometry of the data distribution, then their new representations with respect to the new basis, $v_i$ and $v_j$, are also close \cite{cai2011graph}.

\subsection{ 2DPCA }
\label{sec_related_2dpca}
Let $\textbf{X} = \{X_1, X_2, \cdots, X_n\}$ be a collection of images of size $a\times b$, i.e., $X_i\in\mathcal{R}^{a \times b}$, then the 2D covariance matrix of $\bf{X}$ is estimated by $G_t = \frac{1}{n}(\sum_{i=1}^{n}X_i - \sum_{j=1}^{n}X_j)^T (\sum_{i=1}^{n}X_i - \sum_{j=1}^{n}X_j)$. 2DPCA seeks $r$ projection directions by solving the following \cite{yang2004two}:
\begin{equation}
\label{eq_2dpca}
\max_{P^TP=I_r} \tr{P^TG_tP},
\end{equation}
where $P=[p_1,p_2,\cdots,p_r]\in\mathcal{R}^{b\times r}$ contains $r$ orthonormal projection directions and $I_r$ is an identity matrix of size $r\times r$.

\section{Proposed Method}
\label{sec_proposed}
For 2D data \textbf{X}, \cref{eq_obj_seminmf} naturally leads to a formulation as follows:
\begin{equation}
\label{eq_obj_seminmf3}
\min_{\textbf{U},v_{ij}\ge 0} \sum_{i=1}^{n}\|X_i-\sum_{j=1}^{k}U_jv_{ij}\|_F^2,
\end{equation}
where $\textbf{U} = \{U_i\in\mathcal{R}^{a\times b}\}_{i=1}^{k}$ is a set of 2D centroids. It is seen that all elements or features of 2D matrices $X_i$ are used to construct the new representations of the data and the expressiveness of 2D spatial information is not explicitly considered in \cref{eq_obj_seminmf3}. To alleviate this drawback, we propose to better exploit 2D spatial information by building the new representation $V$ with respect to 2D centroids in a projected subspace with the most expressive spatial information:
\begin{equation}
\label{eq_obj_2dseminmf}
\min_{\textbf{U},v_{ij}\ge 0, P^TP = I_r} \sum_{i=1}^{n}\|X_i - \sum_{j=1}^{k}(U_jPP^T) v_{ij}\|_F^2.
\end{equation}
It is noted that in \cref{eq_obj_2dseminmf}, $U_j PP^T$ projects the $j$th centroid to a subspace of rank $r$ with the most expressive information, so that the sum of squared reconstruction errors of 2D matrices $X_i$ from the new basis and new representations can be minimized. As a result, the new representation and the new basis are sought jointly in the projected, most expressive, low-rank subspace to take the advantage of 2D spatial information. Let $C_i = \sum_{j=1}^{k}U_j v_{ij}$, then
\begin{equation}
\begin{aligned}
&	\sum_{i=1}^{n}\left\|X_i - \sum_{j=1}^{k}\left(U_jPP^T\right) v_{ij}\right\|_F^2	\\
=	& \sum_{i=1}^{n}  \Bigg\{ \|X_i\|_F^2 + \|C_iPP^T\|_F^2 - 2\textbf{Tr}(X_i^T C_iPP^T)  \Bigg\} \\
=	& \sum_{i=1}^{n} \Bigg\{ \|X_i\|_F^2 + \|C_iPP^T\|_F^2 - 2\textbf{Tr}(X_i^TC_iPP^T) \\
	& \quad + \|X_iP\|_F^2 + \|X_iP\|_F^2 - 2\textbf{Tr}(P^TX_i^TX_iP)  \Bigg\} 	\\
=	& \sum_{i=1}^{n} \Bigg\{ \|X_iP\|_F^2 + \|C_iP\|_F^2 - 2\textbf{Tr}(P^TX_i^TC_iP) \Bigg\} 	\\
	&+ \sum_{i=1}^{n} \Bigg\{ \|X_i\|_F^2 + \|X_iPP^T\|_F^2 - 2\textbf{Tr}(X_i^TX_iPP^T) \Bigg\} \\
=	& \sum_{i=1}^{n} \Big\|X_iP - \sum_{j=1}^{k}U_j P v_{ij}\Big\|_F^2 + \sum_{i=1}^{n} \Big\|X_i - X_iPP^T\Big\|_F^2
\end{aligned}
\end{equation} 
%
%
where $\textbf{Tr}(\cdot)$ is the trace operator. The second equation is true because $\|X_i P\|_F^2 = \tr{P^T X_i^T X_i P}$. The third equation is true because it can easily verify that $\|X_i P\|_F^2 = \|X_i PP^T\|_F^2$. It is seen that, in the new formulation, the new representation $v_i$ is sought with the projected data $X_jP$'s in the first term, while in the second term the projection ensures that the most expressive information of the data is retained in the subspace given by $PP^T$. With $P^TP = I_r$, it is straightforward that \cref{eq_obj_2dseminmf} can be written as
%
%
\begin{equation}
\label{eq_obj_2dseminmf_written}
\begin{aligned}
&	\min_{ \textbf{U}, V, P} \Bigg\{\sum_{i=1}^{n} \Big\|X_iPP^T  -  \sum_{j=1}^{k}  U_j PP^T  v_{ij} \Big\|_F^2 \\
&	\qquad\qquad\qquad\qquad\qquad + \sum_{i=1}^{n} \|X_i - X_iPP^T\|_F^2 \Bigg\} \\
&	s.t.\quad v_{ij}\ge 0, P^TP = I_r.
\end{aligned}
\end{equation}
It is noted that the first term in \cref{eq_obj_2dseminmf_written} is essentially equivalent to the first term in last equation of \cref{eq_obj_2dseminmf}, but \cref{eq_obj_2dseminmf_written} keeps the physical meanings of $X_iPP^T$. With simple algebra, the second term in \cref{eq_obj_2dseminmf_written} can be written as $\sum_{i=1}^{n} \|X_i - X_iPP^T\|_F^2 = \tr{\sum_{i=1}^{n} X_i^T X_i} - \tr{P^T(\sum_{i=1}^{n} X_i^T X_i)P}$. We omit the constant term $\tr{\sum_{i=1}^{n} X_i^T X_i}$ and introduce a balancing parameter $\lambda_1\ge 0$ to balance the two terms of \cref{eq_obj_2dseminmf_written} to make it more versatile, which gives raise to 
%
%
%
\begin{equation}
\label{eq_obj_2dseminmf_proj}
\begin{aligned}
&\min_{\textbf{U}, V, P} \sum_{i=1}^{n} \Big\|X_iPP^T - \sum_{j=1}^{k}U_j v_{ij} PP^T \Big\|_F^2 - \lambda_1 \textbf{Tr} (P^T G_P P)\\
& s.t.\quad	v_{ij}\ge 0, P^TP = I_r,
\end{aligned}
\end{equation}
where we use the notation of $G_P = \sum_{i=1}^{n}X_i^TX_i$. When $\lambda_1 = 1$, \cref{eq_obj_2dseminmf_proj} falls back to \cref{eq_obj_2dseminmf_written}. It is seen that, by minimizing \cref{eq_obj_2dseminmf_proj}, $P$ is sought so that the data points are projected to the most expressive subspace, aiming at building new, expressive data representations for clustering. Because clustering is performed with projected data, the adverse affects of noise, occlusions or corruptions can be alleviated. Consequently, \cref{eq_obj_2dseminmf_proj} is inherently robust, even though we do not explicitly enforce robustness or use sparsity-inducing norms to measure reconstruction errors. 

%
\cref{eq_obj_2dseminmf_proj} only considers the linear structures of the projected data while overlooking nonlinear ones which usually exist and are important in real world applications. To address this issue, we enforce the smoothness between linear and nonlinear structures on manifold with the following formulation:
\begin{equation}
\label{eq_obj_2dseminmf_proj_graph}
\begin{aligned}
	\min_{ \textbf{U}, v_{ij}\ge 0, P^TP=I_{r} }  & \Big\| \vc{\textbf{X}^{(P)}} - \vc{\textbf{U}^{(P)}}V^T \Big\|_F^2   \\
&  - \lambda_1 \tr{P^T G_P P} + \lambda_2 \tr{V^T L_P V}, 
\end{aligned}
\end{equation}
where $\lambda_2\ge0$ is a balancing parameter. Here, for the ease of notation, we define $\textbf{X}^{(P)} = \{X_iPP^T\}_{i=1}^{n}$, $\textbf{U}^{(P)} = \{U_jPP^T\}_{j=1}^{k}$, and define the operator $\vc{\textbf{M}}=[\vc{M_1}, \cdots, \vc{M_n}]$ to convert a set of 2D inputs, $\textbf{M}$, to a matrix containing each vectorized 2D input $\vc{M_i}$ as a column for ease of notation. Different from \cref{eq_manifold}, we construct the one-to-one similarity matrix $W_P$ using $\vc{\textbf{X}^{(P)}}$ instead of $\vc{\textbf{X}}$, such that the graph Laplacian is adaptively learned with the most expressive features. Correspondingly, $D_P$ and $L_P$ are constructed based on $W_P$ in a way similar to the construction of $D$ and $L$ based on $W$ as in \cref{eq_manifold}. Note that the above defined operators starkly differ from straight vectorization because spatial information has been retained and these operators only provide a simple way for notation without damaging information. It is seen that the tasks of seeking projections, recovering new data representations, and manifold learning mutually enhance each other and lead to a powerful data representation. 

To further enhance the capability of capturing 2D spatial information, we develop the following Two-dimensional Semi-NMF (TS-NMF):
%
%
%
\begin{equation}
\label{eq_obj}
\begin{aligned}
&	\min_{ \textbf{U}, V, P, Q } 
	\Bigg\{\Big\| \vc{\textbf{X}^{(P)}} - \vc{\textbf{U}^{(P)}}V^T  \Big\|_F^2  \\
& \qquad\qquad\qquad + \Big\| \vc{\textbf{X}^{(Q)}} - \vc{\textbf{U}^{(Q)}}V^T \Big\|_F^2 \\
& \qquad\qquad\qquad - \lambda_1  (\tr{P^T G_P P} + \tr{Q^T G_Q Q}) \\
& \qquad\qquad\qquad + \lambda_2 \tr{V^T (L_P + L_Q) V} \Bigg\}	\\
& s.t.\quad v_{ij}\ge 0, P^TP=I_{r}, Q^TQ = I_r,
\end{aligned}
\end{equation}
%
where $Q\in\mathcal{R}^{a\times r}$ contains projection directions to project $\textbf{X}$ on left. Here, we define $\textbf{X}^{(Q)} = \{QQ^TX_i\}_{i=1}^{n}$, $\textbf{U}^{(Q)} = \{QQ^TU_j\}_{j=1}^{k}$, and $G_Q$ is constructed in a similar way to $G_P$ where $X_i^T$'s are used instead of $X_i$'s, and $L_Q=D_Q-W_Q$ is constructed in a similar way to $L_P$ where $(QQ^TX_i)$'s are used instead of $(X_iPP^T)$'s. It is noted that \cref{eq_obj} is not convex. For any solution $\{P,Q,\vc{\textbf{U}},V\}$, $\{P,Q,\vc{\textbf{U}}Z,VZ^{-1}\}$ is also a solution with the same objective value of \cref{eq_obj} with $Z$ being a positive diagonal matrix. Furthermore, the objective value of \cref{eq_obj} can be reduced if $z_{ii}$ increases. To eliminate this uncertainty, in practice one usually requires the Euclidean length of $v_j$ to be 1 \cite{xu2003document,cai2011graph} in a post processing step. In this paper, we also adopt this strategy.

\section{Optimization}
\label{sec_optimization}
In this section, we will develop an efficient optimization algorithm to solve \cref{eq_obj}. In the following, we will present the alternating optimization steps for each variable in detail.

\subsection{Updating $P$}
\label{sec_opt_P}
The subproblem for $P$-minimization is\footnote{\scriptsize Inspired by \cite{wang2014feature}, $\lambda_2 \tr{V^T L_P V}$ is not included in the $P$-minimization problem due the difficulty of writing it as a function of $P$ explicitly. Instead, $L_P$ is fixed when solving $P$ and will be updated accordingly after $P$ is updated. Similar strategy is used for $Q$-minimization.}:
\begin{equation}
\label{eq_sub_P}
\min_{ P^TP = I_r } \left \| \vc{\textbf{X}^{(P)}} - \vc{\textbf{U}^{(P)}}V^T \right \|_F^2 - \lambda_1 \tr{P^T G_P P}.
\end{equation}
With straightforward algebra, \cref{eq_sub_P} can be rewritten as
\begin{equation}
\label{eq_sub_P3}
\begin{aligned}
\textbf{Tr} \left( P^T \left( \sum_{i=1}^{n} \Lambda_i^T \Lambda_i - \lambda_1 G_P \right) P \right),
\end{aligned}
\end{equation}
where $\Lambda_i = X_i - \sum_{j=1}^{k} U_j v_{ij}$. Let $\xi = \tr{G_P}$, it is easy to see that $\sum_{i=1}^{n} \Lambda_i^T \Lambda_i - \lambda_1 G_P + \xi I_b$ is positive definite, hence, according to \cite{yang2004two}, $P$ can be obtained by
\begin{equation}
\label{eq_sol_P_pre}
\begin{aligned}
&	\textbf{eig}_r \left( \sum_{i=1}^{n} \Lambda_i^T \Lambda_i - \lambda_1 G_P + \xi I_b \right) \\
= &	\textbf{eig}_r \left( \sum_{i=1}^{n} \Lambda_i^T \Lambda_i - \lambda_1 G_P \right),
\end{aligned}
\end{equation}
where $\textbf{eig}_r(\cdot)$ returns the eigenvectors of the input matrix corresponding to its smallest $r$ eigenvalues.

\subsection{Updating $Q$}
\label{sec_opt_Q}
The subproblem for $Q$-minimization is:
\begin{equation}
\label{eq_sub_Q}
\min_{ Q^TQ = I_r } \left \| \vc{\textbf{X}^{(Q)}} - \vc{\textbf{U}^{(Q)}}V^T \right \|_F^2 - \lambda_1 \tr{Q^T G_Q Q}.
\end{equation}
Similarly to \cref{eq_sub_P,eq_sub_P3,eq_sol_P_pre}, it is easy to see that $Q$ can be solved by
\begin{equation}
\label{eq_sol_Q_pre}
\textbf{eig}_r \left( \sum_{i=1}^{n} \Lambda_i \Lambda_i^T - \lambda_1 G_Q \right),
\end{equation}
where $\sum_{i=1}^{n} \Lambda_i \Lambda_i^T - \lambda_1 G_Q$ is positive definite.

\subsection{Optimizing $V$}
For convenience of theoretical analysis, we define 
\begin{equation}
\label{eq_def_AB}
\begin{aligned}
A_1 & = \vc{\textbf{U}^{(P)}}^T \vc{\textbf{U}^{(P)}}, A_2 = \vc{\textbf{U}^{(Q)}}^T \vc{\textbf{U}^{(Q)}} \\
B_1 & = \vc{\textbf{X}^{(P)}}^T \vc{\textbf{U}^{(P)}}, B_2 = \vc{\textbf{X}^{(Q)}}^T \vc{\textbf{U}^{(Q)}},
\end{aligned}
\end{equation}
and separate a matrix $M$ into two parts by
\begin{equation}
M_{ij}^+ = (|M_{ij}| + M_{ij})/2,\quad M_{ij}^- = (|M_{ij}| - M_{ij})/2.
\end{equation}
Then the $V$-minimization can be written as
\begin{equation}
\label{eq_sub_v}
\min_{V} F(V) \quad s.t.\quad v_{ij}\ge 0,
\end{equation}
where 
\begin{equation}
\label{eq_dec_proof_obj}
\begin{aligned}
	F(V) 	= 		&	\textbf{Tr}\left( -2V^TB_1^+ + 2V^TB_1^- + VA_1^+V^T 	\right.	\\
		&	- VA_1^-V^T - 2V^TB_2^+ + 2V^TB_2^- + VA_2^+V^T 		\\
		&	- VA_2^-V^T + \lambda_2 V^TD_PV - \lambda_2 V^TW_PV \\
		&	\left.  + \lambda_2 V^TD_QV - \lambda_2 V^TW_QV \right).
\end{aligned}
\end{equation}
Then, $V$ is updated by:
\begin{equation}
\label{eq_update_v}
v_{ij} \leftarrow v_{ij}
\sqrt{
\frac{
(B_1^{+}+ B_2^{+} + V(A_1^{-} + A_2^{-}) + \lambda_2 (W_P + W_Q) V)_{ij}}
{(B_1^{-} + B_2^{-} + V(A_1^{+} + A_2^{+}) + \lambda_2 (D_P + D_Q) V)_{ij}}
}.
\end{equation}
Regarding \cref{eq_sub_v,eq_dec_proof_obj,eq_update_v}, similar to the conclusion in \cite{ding2010convex}, we have the following theorem:
\begin{theorem}
\label{thm_decrease_V}
Fixing all other variables, the value of F(V) in \cref{eq_dec_proof_obj} is monotonically non-increasing under the updating \cref{eq_update_v}. Furthermore, the limiting solution of \cref{eq_update_v} satisfies KKT condition.
\end{theorem}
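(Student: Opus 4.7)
The plan is to prove the two claims of \thmref{thm_decrease_V} via the standard auxiliary function technique of Lee–Seung, adapted to the Semi-NMF setting as in Ding et al.'s convex/Semi-NMF paper. The key idea is to construct a function $G(V,V')$ that (i) satisfies $G(V,V)=F(V)$ and $G(V,V')\ge F(V)$ for all admissible $V,V'$, and (ii) is componentwise separable enough that its minimizer in $V$ for fixed $V'$ has a closed form which coincides with the multiplicative rule \cref{eq_update_v}. Monotone non-increase then follows from the chain
\[
F(V^{(t+1)})\le G(V^{(t+1)},V^{(t)})\le G(V^{(t)},V^{(t)})=F(V^{(t)}).
\]

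First I would split $F(V)$ in \cref{eq_dec_proof_obj} into four groups: (a) linear terms $-2\,\textbf{Tr}(V^T(B_1^+{+}B_2^+))$ and $+2\,\textbf{Tr}(V^T(B_1^-{+}B_2^-))$; (b) positive quadratic terms $\textbf{Tr}(V(A_1^+{+}A_2^+)V^T)$ and $\lambda_2\textbf{Tr}(V^T(D_P{+}D_Q)V)$; (c) negative quadratic terms $-\textbf{Tr}(V(A_1^-{+}A_2^-)V^T)$ and $-\lambda_2\textbf{Tr}(V^T(W_P{+}W_Q)V)$. For (b), I would use the componentwise bound $(VA)_{ij}\,v_{ij}\le \tfrac12\sum_{j'}A_{jj'}\bigl(v'_{ij}v'_{ij'}\bigr)\bigl(v_{ij}^2/(v'_{ij})^2 + v_{ij'}^2/(v'_{ij'})^2\bigr)$ that gives an upper envelope of the convex quadratic at $V'$; equivalently $\textbf{Tr}(VAV^T)\le\sum_{ij}(V'A)_{ij}v_{ij}^2/v'_{ij}$ whenever $A$ is symmetric with nonnegative entries, which applies to $A_1^+,A_2^+,D_P,D_Q$. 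For (c), I would bound the concave pieces from above by a linearization combined with a log-type inequality: $-A_{jj'}v_{ij}v_{ij'}\le -A_{jj'}v'_{ij}v'_{ij'}\bigl(1+\log\tfrac{v_{ij}v_{ij'}}{v'_{ij}v'_{ij'}}\bigr)$, which is valid because $A_1^-,A_2^-,W_P,W_Q$ have nonnegative entries. For the sign-indefinite linear term, the term $-2\,\textbf{Tr}(V^T B^+)$ is handled with $-B^+_{ij}v_{ij}\le -B^+_{ij}v'_{ij}\bigl(1+\log(v_{ij}/v'_{ij})\bigr)$, while $+2\,\textbf{Tr}(V^T B^-)$ is kept as is. Summing all these majorizers produces the candidate $G(V,V')$.

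Next, I would verify the two defining properties of an auxiliary function. Tangency $G(V,V')\big|_{V=V'}=F(V')$ is immediate from the fact that each bound is tight at $V=V'$. The majorization $G(V,V')\ge F(V)$ is the concatenation of the inequalities above, each of which holds for nonnegative $V,V'$. I would then compute $\partial G/\partial v_{ij}=0$: because $G$ is a sum of terms that, as a function of $v_{ij}$, are polynomials in $v_{ij}^2$ plus a $\log v_{ij}$ piece (from the negative linear and concave quadratic bounds), the stationarity condition rearranges into a quadratic in $v_{ij}$ whose positive root is exactly
\[
v_{ij}=v'_{ij}\sqrt{\tfrac{(B_1^++B_2^++V'(A_1^-+A_2^-)+\lambda_2(W_P+W_Q)V')_{ij}}{(B_1^-+B_2^-+V'(A_1^++A_2^+)+\lambda_2(D_P+D_Q)V')_{ij}}},
\]
matching \cref{eq_update_v}. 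The monotone descent of $F$ along the iterates is then immediate, and nonnegativity is preserved by inspection of the update.

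For the KKT claim, let $V^\star$ be a limit of the iterates. Writing the Lagrangian for \cref{eq_sub_v} with multipliers $\mu_{ij}\ge 0$ associated to $v_{ij}\ge 0$, the KKT conditions are primal/dual feasibility, stationarity $\tfrac12[\nabla F(V^\star)]_{ij}=\mu_{ij}$, and complementary slackness $v^\star_{ij}\mu_{ij}=0$. At any fixed point of \cref{eq_update_v} one has $v^\star_{ij}\bigl(M_{ij}-N_{ij}\bigr)v^\star_{ij}=0$ where $M_{ij}-N_{ij}=\tfrac12[\nabla F(V^\star)]_{ij}$. Multiplying through by $v^\star_{ij}$ yields $(v^\star_{ij})^2[\nabla F(V^\star)]_{ij}=0$, i.e. either $v^\star_{ij}=0$ or $[\nabla F(V^\star)]_{ij}=0$; in both cases $v^\star_{ij}\mu_{ij}=0$ with $\mu_{ij}=\tfrac12[\nabla F(V^\star)]_{ij}\ge 0$ on the active set (which in turn follows because for $v^\star_{ij}>0$ stability of the ratio in \cref{eq_update_v} forces $M_{ij}=N_{ij}$).

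The main obstacle is the construction and verification of $G(V,V')$ for the two concave quadratic groups together with the graph–Laplacian term $-\lambda_2\textbf{Tr}(V^T(W_P+W_Q)V)$: these must be majorized simultaneously so that the stationarity equation collapses into a single square-rooted ratio that aggregates $A_1^{\pm},A_2^{\pm},W_{P},W_{Q},D_{P},D_{Q},B_1^{\pm},B_2^{\pm}$ as in \cref{eq_update_v}. Once the bookkeeping of signs and the nonnegativity of all constituent matrices are in place, the rest of the argument mirrors \cite{ding2010convex}.
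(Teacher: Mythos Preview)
Your overall strategy is the same as the paper's (a Lee--Seung/Ding-style auxiliary function), and your handling of the quadratic terms and the KKT verification matches the paper. However, there is a concrete gap in one step: you say the positive linear term $+2\,\textbf{Tr}(V^T B^-)$ ``is kept as is.'' With that choice, the stationarity equation for your auxiliary function is not of the homogeneous form $a\,v'_{ij}/v_{ij}=b\,v_{ij}/v'_{ij}$. Writing $x=v_{ij}/v'_{ij}$, you would instead get
\[
\big[(V'(A_1^+{+}A_2^+)+\lambda_2(D_P{+}D_Q)V')_{ij}\big]\,x^2+(B_1^-{+}B_2^-)_{ij}\,x-\big[(B_1^+{+}B_2^+{+}V'(A_1^-{+}A_2^-)+\lambda_2(W_P{+}W_Q)V')_{ij}\big]=0,
\]
whose positive root is not $v'_{ij}\sqrt{N_{ij}/M_{ij}}$ and hence does not reproduce \cref{eq_update_v}. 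So your claim that the stationarity condition ``rearranges into a quadratic \ldots\ whose positive root is exactly'' the stated update fails as written.

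The fix the paper uses is to also majorize the positive linear term: for $\alpha,\beta>0$ one has $\alpha\le(\alpha^2+\beta^2)/(2\beta)$, hence
\[
2(B_\kappa^-)_{ij}\,v_{ij}\ \le\ (B_\kappa^-)_{ij}\,\frac{v_{ij}^2+(v'_{ij})^2}{v'_{ij}},
\]
with equality at $v_{ij}=v'_{ij}$. This pushes the $B^-$ contribution into the $v_{ij}/v'_{ij}$ group in the first-order condition, restoring the homogeneous form and yielding exactly \cref{eq_update_v}. A minor additional point: the paper also checks that the auxiliary function is convex in $V$ (diagonal positive-definite Hessian), which you should state to ensure the stationary point is a global minimizer of $G(\cdot,V')$; otherwise the middle inequality in your descent chain is not justified.
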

The proof of \cref{thm_decrease_V} is provided in the Appendix. It is noted that \cref{eq_update_v} provides an iterative way to solve \cref{eq_sub_v}, which requires an inner loop for optimization. However, in a way similar to NMF \cite{lee1999learning}, GNMF \cite{cai2011graph}, and Semi-NMF \cite{ding2010convex}, we do not require an exact solution to the subproblem \cref{eq_sub_v}. Instead, \cref{eq_update_v} is performed once to solve \cref{eq_sub_v}. Similar idea is also found in \cite{lin2010augmented}, where exact solutions are not required for intermediate updating.

\subsection{Optimizing $\textbf{U}$}
The subproblem associated with $\textbf{U}$-minimization is
\begin{equation}
\label{eq_sub_U}
\begin{aligned}
\min_{ \textbf{U} }	& \left \| \vc{\textbf{X}^{(P)}} - \vc{\textbf{U}^{(P)}}V^T \right \|_F^2 + \left \| \vc{\textbf{X}^{(Q)}} - \vc{\textbf{U}^{(Q)}}V^T \right \|_F^2.
\end{aligned}
\end{equation}
We investigate the two terms separately. The first term is minimized when it satisfies
\begin{equation}
\vc{\textbf{U}^{(P)}} = \vc{\textbf{X}^{(P)}} V(V^TV)^{-1},
\end{equation}
which is equivalent to the following condition
\begin{equation}
\label{eq_u}
U_i PP^T = \left( \sum_{j=1}^{n} X_j (V(V^TV)^{-1})_{ji} \right) PP^T.
\end{equation}
It is seen that there are infinitely many choices for $U_i$ to meet the above condition, e.g., any $U_i$ such that $U_i - \sum_{j=1}^{n} X_j(V(V^TV)^{-1})_{ji}$ is in the null space of $PP^T$. Here, we use the simplest way to meet this requirement by requiring
\begin{equation}
\label{eq_update_u}
\textbf{U} = \Big\{U_i = \sum_{j=1}^{n} X_j (V(V^TV)^{-1})_{ji}\Big\}_{i=1}^{k}.
\end{equation}
Similarly, we see that the second term in \cref{eq_sub_U} can be simultaneously minimized by \cref{eq_update_u}. Therefore, we adopt \cref{eq_update_u} to update \textbf{U}. Here, it is noted that $V^TV$ is usually invertible and computationally tractable due to its small size. Otherwise, pseudo-inverse is used as in \cite{ding2010convex}.

Finally, we adjust $\textbf{U}$ and $V$ as follows, such that $\vc{\textbf{U}}V^T$ does not change:
\begin{equation}
\label{eq_adjust}
v_{jk}\leftarrow v_{jk}/\|v_k\|^2, \quad \vc{\textbf{U}}_{ik}\leftarrow \vc{\textbf{U}}_{ik}\|v_k\|^2.
\end{equation}
Then standard K-means is applied to V to obtain cluster indicators. We summarize the overall procedure in \algref{alg_tsnmf}.

\begin{algorithm}[h]
\caption{ TS-NMF for Clustering } 
\vspace{1mm}
\begin{algorithmic}[1] 
\STATE \textbf{Input}: $\textbf{X}$, $\lambda_1$, $\lambda_2$, $r$, $t_{max}$
\STATE \textbf{Initialize:} $\textbf{U}^0$, $V^0$, $P^0$, $Q^0$, $t=0$. 
\REPEAT
\STATE Update $P$ and $Q$ by \cref{eq_sol_P_pre,eq_sol_Q_pre}, respectively;
\STATE Update $L_P$ and $L_Q$ by \cref{eq_manifold} using $\vc{\textbf{X}^{(P)}}$ and $\vc{\textbf{X}^{(Q)}}$;
\STATE Update $V$ and \textbf{U} by \cref{eq_sub_v,eq_update_u}, respectively;
\STATE $t=t+1$.
\UNTIL $t\geq t_{max}$ or convergence
\STATE Adjust $\textbf{U}$ and $V$ according to \cref{eq_adjust}, and apply standard K-means to $V$
\STATE \textbf{Output}: Predicted class indicators 
\vspace{1mm} 
\end{algorithmic} 
\label{alg_tsnmf}
\end{algorithm}

\subsection{Complexity Analysis}
\label{sec_complexity}
Because multiplications dominate the complexity, we only count multiplications. Given that $d = ab$, $a\approx b$, $a,b \ge r$, $n\gg k$, let $T$ be the total number of iterations for \algref{alg_tsnmf}, then the total complexity of \algref{alg_tsnmf} is $O(Tn^2d + Tnd^{\frac{3}{2}})$. It is similar to GNMF and RMNMF. The complexity mainly comes from the updating of graph Laplacian matrices with complexity $O(n^2d)$ per iteration. Fortunately, it can be easily parallel for this step per iteration, and thus it is not a bottleneck for real world applications.

\section{Experiments}
\label{sec_experiments}
To demonstrate the effectiveness of TS-NMF, in this section, we present the comprehensive experimental results in comparison with several state-of-the-art algorithms. The performances are measured based on three evaluation metrics including clustering accuracy (ACC), normalized mutual information (NMI), and purity, whose details can be found in \cite{huang2014robust,peng2017nonnegative}. In the following, we will briefly introduce the benchmark data sets, the baseline methods in comparison, and present the experimental results in detail. For purpose of reproducibility, we provide the data and codes at xxxx.

\subsection{Benchmark Data Sets}
\label{sec_exp_data}
We use seven data sets in the experiment, which are briefly described as follows: 1) \textbf{Yale} \cite{belhumeur1997eigenfaces}. It contains 165 gray scale images of 15 persons with 11 images of size 32$\times$32 per person. 2) \textbf{Extended Yale B} (EYaleB) \cite{georghiades2001few}. This data set has 38 persons and around 64 face images under different illuminations per each person. The images were cropped to 192$\times$168 and were resized to 32$\times$32 in our experiments. 3) \textbf{ORL} \cite{samaria1994parameterisation}. This data has 40 individuals and 10 images were taken at different times, with varying facial expressions, facial details, and lighting conditions per each individual. Each image has 32$\times$32 pixels. 4) \textbf{JAFFE} \cite{lyons1998japanese}. 10 Japanese female models posed 7 facial expressions and 213 images were collected. Each image has been rated on 6 motion adjectives by 60 Japanese subjects. 5) \textbf{PIX} \cite{hond1997distinctive}. 100 gray scale images of $100\times100$ pixels from 10 objects were collected. 6) \textbf{Semeion}. 1,593 handwritten digits written by around 80 persons were collected. These images were scanned, stretched into size 16$\times$16. 



\begin{table*}[!tb] 
	\Huge
	\centering
	\caption{Clustering Performance on EYaleB}
	\resizebox{0.8\textwidth}{!}{
		\begin{tabular}{ |c| |c |c |c |c |c |c | c| c| c| c| }
			\hline
			\multirow{2}{1cm}{\centering N}& \multicolumn{10}{c|}{Accuracy (\%)}\\	
			\cline{2-11}
			\multirow{2}{1cm}{}
			& K-Means  			& PCA				& RPCA		 		& 2DPCA				& NMF 				& SC				& GNMF 				& RMNMF				&	Semi-NMF		& TS-NMF		\\\hline
			5	& 23.01$\pm$00.80	& 23.38$\pm$00.68	& 23.01$\pm$00.99	& 23.32$\pm$01.06	& 23.79$\pm$01.61	& ---------------	& 41.97$\pm$09.86	& 28.82$\pm$03.98	& 24.20$\pm$02.27	& \bf{77.27$\pm$11.23}	\\
			10	& 13.84$\pm$01.19	& 13.46$\pm$00.67	& 13.93$\pm$01.03	& 13.89$\pm$00.81	& 14.66$\pm$00.39	& ---------------	& 30.09$\pm$05.00	& 22.67$\pm$02.02	& 16.88$\pm$02.66	& \bf{66.02$\pm$07.15}	\\
			15	& 11.46$\pm$01.01	& 10.60$\pm$00.47	& 10.97$\pm$00.77	& 11.13$\pm$00.94	& 11.38$\pm$00.40	& ---------------	& 23.80$\pm$06.49	& 20.98$\pm$01.98	& 13.66$\pm$01.70	& \bf{61.41$\pm$05.86}	\\
			20	& 10.69$\pm$01.08	& 09.65$\pm$00.51	& 09.85$\pm$00.80	& 10.48$\pm$00.79	& 09.77$\pm$00.43	& ---------------	& 21.17$\pm$01.96	& 19.64$\pm$01.19	& 13.27$\pm$01.03	& \bf{58.92$\pm$05.04}	\\
			25	& 09.35$\pm$01.05	& 07.70$\pm$00.47	& 08.37$\pm$00.38	& 08.78$\pm$00.72	& 08.54$\pm$00.22	& ---------------	& 15.96$\pm$01.99	& 17.75$\pm$01.30	& 10.07$\pm$00.61	& \bf{55.53$\pm$04.68}	\\
			30	& 08.48$\pm$00.71	& 07.33$\pm$00.24	& 08.14$\pm$00.66	& 08.61$\pm$01.02	& 07.88$\pm$00.17	& ---------------	& 16.47$\pm$00.98	& 17.21$\pm$01.10	& 10.26$\pm$00.77	& \bf{55.29$\pm$03.18}	\\
			35	& 08.85$\pm$00.75	& 06.63$\pm$00.28	& 08.67$\pm$00.80	& 08.31$\pm$00.48	& 07.12$\pm$00.16	& ---------------	& 14.64$\pm$00.76	& 16.48$\pm$01.23	& 09.85$\pm$00.76	& \bf{53.73$\pm$02.64}	\\
			38	& 08.53				& 06.59				& 08.99				& 08.33				& 07.08				& ---------------	& 16.16				& 16.86				& 08.70				& \bf{56.84}			\\	\hline
			Average	
			& 11.78				& 10.67				& 11.49				& 11.61				& 11.28				& ---------------	& 22.53				& 20.05				& 13.36				& \bf{60.63}			\\ \hline
			
			\hline
			\multirow{2}{1cm}{\centering N}& \multicolumn{10}{c|}{Normalized Mutual Information (\%)}\\	
			\cline{2-11}
			\multirow{2}{1cm}{}
			& K-Means  			& PCA				& RPCA		 		& 2DPCA				& NMF 				& SC				& GNMF 				& RMNMF				&	Semi-NMF		& TS-NMF		\\\hline
			5	& 00.94$\pm$00.73	& 00.78$\pm$00.44	& 00.74$\pm$00.35	& 00.95$\pm$00.87	& 01.25$\pm$01.61	& ---------------	& 30.22$\pm$15.58	& 06.49$\pm$04.14	& 02.73$\pm$03.44	& \bf{71.33$\pm$08.35}	\\
			10	& 02.39$\pm$02.21	& 01.56$\pm$00.44	& 02.17$\pm$01.15	& 02.24$\pm$00.96	& 03.07$\pm$00.61	& ---------------	& 29.67$\pm$06.65	& 15.34$\pm$01.93	& 06.94$\pm$04.88	& \bf{64.72$\pm$05.26}	\\
			15	& 03.96$\pm$01.48	& 02.77$\pm$00.68	& 03.31$\pm$01.35	& 03.70$\pm$01.63	& 04.05$\pm$00.63	& ---------------	& 28.01$\pm$11.77	& 19.89$\pm$03.74	& 08.49$\pm$03.26	& \bf{64.98$\pm$05.78}	\\
			20	& 06.91$\pm$01.59	& 05.02$\pm$01.17	& 05.50$\pm$01.99	& 06.61$\pm$01.49	& 05.54$\pm$00.56	& ---------------	& 27.62$\pm$03.08	& 23.11$\pm$01.25	& 11.99$\pm$01.67	& \bf{63.68$\pm$02.96}	\\
			25	& 07.00$\pm$01.42	& 04.18$\pm$01.05	& 05.41$\pm$00.88	& 05.93$\pm$01.49	& 06.16$\pm$00.39	& ---------------	& 21.60$\pm$03.34	& 24.04$\pm$01.15	& 10.23$\pm$01.69	& \bf{62.48$\pm$03.21}	\\
			30	& 07.79$\pm$01.19	& 05.51$\pm$00.41	& 07.49$\pm$01.02	& 07.71$\pm$01.01	& 07.34$\pm$00.28	& ---------------	& 23.79$\pm$00.89	& 26.28$\pm$01.09	& 12.74$\pm$01.58	& \bf{62.26$\pm$02.40}	\\
			35	& 10.04$\pm$01.44	& 05.81$\pm$00.55	& 09.41$\pm$01.48	& 09.56$\pm$00.74	& 08.08$\pm$00.24	& ---------------	& 25.36$\pm$01.27	& 27.67$\pm$01.58	& 14.08$\pm$01.14	& \bf{61.12$\pm$02.31}	\\
			38	& 10.51				& 06.13				& 10.42				& 10.26				& 08.79				& ---------------	& 25.86				& 28.46				& 13.13				& \bf{63.63}			\\	\hline
			Average	
			& 06.19				& 03.97				& 05.56				& 05.87				& 05.54				& ---------------	& 26.52				& 21.41				& 10.04				& \bf{64.28}			\\	\hline

		\end{tabular}
	}
	\label{tab_per_eyaleb}
\end{table*}

\begin{table*}[!tb] 
	\Huge
	\centering
	\caption{Clustering Performance on ORL }
	\resizebox{0.8\textwidth}{!}{
		\begin{tabular}{ |c| |c |c |c |c |c |c | c| c| c| c| }
			\hline
			\multirow{2}{1cm}{\centering N}& \multicolumn{10}{c|}{Accuracy (\%)}\\	
			\cline{2-11}
			\multirow{2}{1cm}{}
			& K-Means  			& PCA				& RPCA		 		& 2DPCA				& NMF 				& SC				& GNMF 				& RMNMF				& Semi-NMF			& TS-NMF	\\	\hline
			5	& 79.00$\pm$13.54	& 81.00$\pm$15.36	& 81.00$\pm$14.06	& 80.20$\pm$13.35	& 40.40$\pm$04.97	& 59.60$\pm$08.37	& 81.00$\pm$14.43	& 76.80$\pm$14.79	& 74.80$\pm$11.78	& \bf{81.60$\pm$08.42}	\\
			10	& 62.10$\pm$08.67	& 64.00$\pm$07.16	& 68.20$\pm$09.72	& 65.80$\pm$09.70	& 11.00$\pm$00.00	& 37.00$\pm$08.75	& 71.20$\pm$08.18	& 66.70$\pm$05.81	& 65.80$\pm$06.32	& \bf{76.30$\pm$06.83}	\\
			15	& 62.53$\pm$05.99	& 61.13$\pm$04.49	& 63.53$\pm$06.80	& 64.53$\pm$05.78	& 29.20$\pm$01.80	& 28.67$\pm$03.50	& 67.80$\pm$05.99	& 66.60$\pm$03.01	& 68.27$\pm$06.90	& \bf{73.27$\pm$06.27}	\\
			20	& 57.80$\pm$06.21	& 58.15$\pm$05.68	& 60.90$\pm$05.78	& 61.80$\pm$06.54	& 26.55$\pm$02.77	& 25.80$\pm$01.70	& 65.50$\pm$07.86	& 61.10$\pm$03.41	& 63.05$\pm$03.86	& \bf{70.85$\pm$06.75}	\\
			25	& 57.24$\pm$03.21	& 57.04$\pm$02.24	& 59.16$\pm$03.08	& 60.12$\pm$03.61	& 24.08$\pm$01.16	& 23.60$\pm$01.74	& 63.96$\pm$05.28	& 62.72$\pm$03.66	& 61.16$\pm$04.06	& \bf{68.08$\pm$03.11}	\\
			30	& 55.63$\pm$03.17	& 52.53$\pm$02.59	& 57.97$\pm$02.71	& 57.40$\pm$03.25	& 22.50$\pm$01.00	& 22.17$\pm$01.22	& 62.23$\pm$03.03	& 58.57$\pm$03.81	& 60.10$\pm$04.66	& \bf{67.40$\pm$03.64}	\\
			35	& 52.83$\pm$02.54	& 50.49$\pm$02.74	& 55.11$\pm$03.29	& 56.31$\pm$04.72	& 21.31$\pm$00.65	& 20.71$\pm$00.79	& 59.60$\pm$04.22	& 56.60$\pm$02.79	& 57.91$\pm$03.26	& \bf{64.20$\pm$02.63}	\\
			40	& 53.50				& 44.00				& 63.00				& 55.00				& 20.25				& 20.25				& 55.75				& 56.25				& 57.75				& \bf{68.00}			\\	\hline
			Average		
			& 60.09				& 58.54				& 63.61				& 62.56				& 24.41				& 29.73				& 65.88				& 63.17				& 63.61				& \bf{71.21}			\\	\hline
			\hline
			\multirow{2}{1cm}{\centering N}& \multicolumn{10}{c|}{Normalized Mutual Information (\%)}\\	
			\cline{2-11}
			\multirow{2}{1cm}{}
			& K-Means  			& PCA				& RPCA		 		& 2DPCA				& NMF 				& SC				& GNMF 				& RMNMF				& Semi-NMF			& TS-NMF	\\	\hline
			5	& 73.72$\pm$15.91	& 77.97$\pm$15.06	& 77.51$\pm$13.57	& 74.43$\pm$16.23	& 23.68$\pm$06.98	& 53.17$\pm$09.32	& 77.63$\pm$14.69	& 71.64$\pm$15.32	& 72.33$\pm$10.60	& \bf{77.48$\pm$06.51}	\\
			10	& 67.81$\pm$08.47	& 71.63$\pm$05.75	& 74.16$\pm$07.61	& 72.39$\pm$07.42	& 11.73$\pm$00.00	& 38.64$\pm$09.37	& 77.19$\pm$06.29	& 70.56$\pm$03.50	& 73.46$\pm$05.63	& \bf{80.51$\pm$05.15}	\\
			15	& 72.91$\pm$05.39	& 72.27$\pm$03.98	& 73.69$\pm$04.91	& 74.73$\pm$04.78	& 37.84$\pm$01.32	& 35.32$\pm$04.15	& 77.28$\pm$04.46	& 73.67$\pm$02.22	& 77.50$\pm$05.86	& \bf{80.95$\pm$04.26}	\\
			20	& 71.60$\pm$04.32	& 71.91$\pm$04.57	& 73.40$\pm$03.73	& 74.23$\pm$04.82	& 40.17$\pm$02.41	& 38.02$\pm$01.01	& 76.93$\pm$05.26	& 72.62$\pm$02.62	& 74.95$\pm$02.39	& \bf{80.42$\pm$04.02}	\\
			25	& 72.45$\pm$01.91	& 71.63$\pm$01.49	& 73.38$\pm$01.64	& 73.89$\pm$02.32	& 41.04$\pm$01.31	& 39.22$\pm$01.46	& 77.73$\pm$02.67	& 75.15$\pm$02.24	& 74.99$\pm$03.18	& \bf{79.07$\pm$02.08}	\\
			30	& 72.53$\pm$01.90	& 71.20$\pm$01.99	& 73.90$\pm$02.14	& 73.79$\pm$01.89	& 42.42$\pm$01.14	& 40.53$\pm$00.89	& 77.02$\pm$02.35	& 73.34$\pm$02.33	& 75.48$\pm$03.30	& \bf{80.34$\pm$02.03}	\\
			35	& 70.71$\pm$01.40	& 70.60$\pm$01.50	& 72.12$\pm$01.75	& 73.32$\pm$02.95	& 43.01$\pm$00.85	& 41.48$\pm$00.66	& 75.51$\pm$02.27	& 72.43$\pm$01.70	& 74.54$\pm$02.42	& \bf{79.01$\pm$01.61}	\\
			40	& 71.82				& 69.07				& 72.35				& 74.07				& 43.01				& 42.64				& 74.72				& 73.03				& 75.32				& \bf{81.27}			\\	\hline
			Average		
			& 71.69				& 72.03				& 73.81				& 73.86				& 35.36				& 41.13				& 76.75				& 72.81				& 74.82				& \bf{79.88}			\\	\hline
		\end{tabular}
	}
	\label{tab_per_orl}
\end{table*}

\begin{table*}[!tb] 
	\Huge
	\centering
	\caption{Clustering Performance on Semeion }
	\resizebox{0.8\textwidth}{!}{
		\begin{tabular}{ |c| |c |c |c |c |c |c | c| c| c| c|  }
			\hline
			\multirow{2}{1cm}{\centering N}& \multicolumn{10}{c|}{Accuracy (\%)}\\	
			\cline{2-11}
			\multirow{2}{1cm}{}
			& K-Means  			& PCA				& RPCA		 		& 2DPCA				& NMF 				& SC				& GNMF 					& RMNMF				& Semi-NMF			& TS-NMF	\\ \hline
			2	& 89.05$\pm$10.42	& 90.42$\pm$08.58	&  92.15$\pm$06.73	& 90.66$\pm$08.57	& 69.39$\pm$09.88	& ---------------	& \bf{95.46$\pm$05.63}	& 87.58$\pm$10.64	& 87.41$\pm$10.76	& 95.18$\pm$05.84		\\
			3	& 82.97$\pm$08.58	& 83.13$\pm$08.31	&  83.57$\pm$08.32	& 83.22$\pm$08.46	& 50.15$\pm$09.24	& ---------------	& 85.41$\pm$17.18		& 78.23$\pm$09.17	& 79.83$\pm$09.72	& \bf{86.36$\pm$08.10}	\\
			4	& 75.41$\pm$11.13	& 77.55$\pm$07.30	&  75.13$\pm$11.56	& 75.79$\pm$09.24	& 43.43$\pm$06.98	& ---------------	& 77.90$\pm$13.92		& 65.22$\pm$07.80	& 71.45$\pm$09.13	& \bf{83.23$\pm$11.88}	\\
			5	& 75.16$\pm$07.44	& 77.55$\pm$06.03	&  74.23$\pm$07.17	& 75.49$\pm$09.24	& 39.09$\pm$04.57	& ---------------	& 82.76$\pm$08.45		& 62.33$\pm$07.31	& 71.69$\pm$08.37	& \bf{84.73$\pm$09.07}	\\
			6	& 63.45$\pm$10.28	& 65.81$\pm$10.08	&  65.28$\pm$08.80	& 64.84$\pm$09.07	& 33.62$\pm$03.05	& ---------------	& 71.47$\pm$11.65		& 54.67$\pm$06.88	& 68.07$\pm$05.96	& \bf{73.59$\pm$09.66}	\\
			7	& 63.16$\pm$06.17	& 69.06$\pm$05.73	&  63.83$\pm$05.63	& 63.52$\pm$07.17	& 27.64$\pm$02.11	& ---------------	& 63.88$\pm$05.62		& 52.94$\pm$06.03	& 64.98$\pm$05.99	& \bf{74.83$\pm$05.79}	\\
			8	& 67.90$\pm$07.40	& 69.11$\pm$05.11	&  67.09$\pm$06.39	& 64.10$\pm$05.46	& 26.45$\pm$00.86	& ---------------	& 69.37$\pm$07.02		& 48.23$\pm$04.31	& 64.13$\pm$04.94	& \bf{75.62$\pm$07.84}	\\
			9	& 61.38$\pm$05.31	& 61.36$\pm$05.41	&  59.94$\pm$05.76	& 62.15$\pm$03.31	& 24.69$\pm$00.99	& ---------------	& 61.34$\pm$02.60		& 44.90$\pm$02.77	& 57.98$\pm$02.51	& \bf{73.62$\pm$08.18}	\\
			10	& 54.55				& 64.28				&  54.36			& 60.33				& 22.91				& ---------------	& 63.03					& 43.57				& 60.14				& \bf{71.00}			\\	\hline
			Average	
			& 70.34				& 73.14				&  70.62			& 71.12				& 37.49				& ---------------	& 74.51					& 59.74				& 69.52				& \bf{79.79}			\\ \hline
			\hline
			\multirow{2}{1cm}{\centering N}& \multicolumn{10}{c|}{Normalized Mutual Information (\%)}\\	
			\cline{2-11}
			\multirow{2}{1cm}{}
			& K-Means  			& PCA				& RPCA		 		& 2DPCA				& NMF 				& SC				& GNMF 					& RMNMF				& Semi-NMF			& TS-NMF	\\ \hline
			2	& 61.08$\pm$29.93	& 63.41$\pm$26.07	&  67.77$\pm$21.94	& 64.30$\pm$25.81	& 14.15$\pm$09.50	& ---------------	& 77.95$\pm$19.53		& 55.48$\pm$28.88	& 56.09$\pm$28.43	& \bf{78.37$\pm$17.47}	\\
			3	& 58.78$\pm$11.94	& 58.78$\pm$11.50	&  60.42$\pm$10.95	& 59.06$\pm$12.09	& 13.60$\pm$11.14	& ---------------	& \bf{70.30$\pm$15.81}	& 50.39$\pm$12.33	& 53.55$\pm$12.81	& 68.73$\pm$14.77		\\
			4	& 58.02$\pm$09.08	& 58.83$\pm$07.07	&  58.92$\pm$09.32	& 56.13$\pm$11.11	& 17.44$\pm$09.41	& ---------------	& 66.30$\pm$12.69		& 44.83$\pm$06.88	& 51.55$\pm$05.43	& \bf{72.96$\pm$12.13}	\\
			5	& 61.16$\pm$06.75	& 61.29$\pm$07.05	&  60.33$\pm$06.31	& 61.41$\pm$07.31	& 21.79$\pm$05.23	& ---------------	& 73.16$\pm$06.35		& 43.45$\pm$07.15	& 53.71$\pm$08.96	& \bf{74.27$\pm$10.81}	\\
			6	& 54.71$\pm$08.23	& 54.04$\pm$07.73	&  55.06$\pm$07.93	& 55.34$\pm$07.59	& 18.42$\pm$03.73	& ---------------	& 62.98$\pm$11.05		& 39.81$\pm$06.31	& 51.44$\pm$05.82	& \bf{64.29$\pm$10.86}	\\
			7	& 54.38$\pm$04.31	& 55.00$\pm$05.03	&  55.10$\pm$04.31	& 54.71$\pm$05.78	& 16.17$\pm$04.21	& ---------------	& 58.31$\pm$03.67		& 41.71$\pm$04.53	& 52.64$\pm$04.71	& \bf{65.87$\pm$05.78}	\\
			8	& 58.94$\pm$04.37	& 56.68$\pm$03.34	&  58.43$\pm$03.70	& 56.69$\pm$03.66	& 18.29$\pm$01.67	& ---------------	& 64.05$\pm$05.99		& 39.51$\pm$03.19	& 53.46$\pm$04.09	& \bf{68.93$\pm$06.20}	\\
			9	& 55.05$\pm$03.43	& 53.38$\pm$03.08	&  54.40$\pm$03.55	& 55.44$\pm$03.30	& 17.40$\pm$01.01	& ---------------	& 59.79$\pm$02.67		& 36.52$\pm$02.66	& 49.92$\pm$02.11	& \bf{69.63$\pm$05.60}	\\
			10	& 51.67				& 53.19				&  51.18			& 55.27				& 16.88				& ---------------	& \bf{58.88}			& 35.44				& 52.34				& \bf{63.53}					\\	\hline
			Average	
			& 57.09				& 57.18				&  57.96			& 57.60				& 17.13				& ---------------	& 65.75					& 43.02				& 52.74			 	& \bf{69.62}			\\	\hline
		\end{tabular}
	}
	\label{tab_per_semeion}
\end{table*}

\begin{table*}[!tb] 
	\Huge
	\centering
	\caption{Clustering Performance on JAFFE }
	\resizebox{0.8\textwidth}{!}{
		\begin{tabular}{ |c| |c |c |c |c |c |c | c| c| c| c| }
			\hline
			\multirow{2}{1cm}{\centering N}& \multicolumn{10}{c|}{Accuracy (\%)}\\	
			\cline{2-11}
			\multirow{2}{1cm}{}
			& K-Means  				& PCA					& RPCA		 			& 2DPCA					& NMF 				& SC					& GNMF 					& RMNMF					& Semi-NMF			& TS-NMF		\\ \hline
			2	& \bf{100.0$\pm$00.00}	& \bf{100.0$\pm$00.00}	& \bf{100.0$\pm$00.00}	& \bf{100.0$\pm$00.00}	& 64.95$\pm$09.34	& \bf{100.0$\pm$00.00}	& \bf{100.0$\pm$00.00}	& \bf{100.0$\pm$00.00}	& 99.75$\pm$00.79	& \bf{100.0$\pm$00.00}	\\
			3	& 98.40$\pm$01.86		& 98.55$\pm$01.93		& \bf{100.0$\pm$00.00}	& 98.40$\pm$01.86		& 53.09$\pm$07.96	& 84.97$\pm$19.34		& 99.84$\pm$00.51		& 97.62$\pm$01.86		& 95.44$\pm$06.40	& 99.84$\pm$00.51		\\
			4	& 99.30$\pm$01.83		& 99.30$\pm$01.83		& 99.19$\pm$02.57		& 97.79$\pm$05.27		& 51.02$\pm$05.49	& 72.55$\pm$11.74		& 99.42$\pm$01.26		& 98.83$\pm$01.73		& 95.45$\pm$05.99	& \bf{99.53$\pm$01.12}	\\
			5	& 98.68$\pm$02.00		& 98.67$\pm$02.00		& 98.87$\pm$02.42		& 98.58$\pm$01.95		& 45.09$\pm$04.83	& 74.08$\pm$10.70		& 99.15$\pm$01.37		& 97.46$\pm$03.09		& 95.34$\pm$05.04	& \bf{99.62$\pm$00.67}	\\
			6	& 95.97$\pm$04.13		& 97.10$\pm$02.03		& \bf{99.38$\pm$01.95}	& 93.31$\pm$04.63		& 40.64$\pm$04.38	& 63.27$\pm$10.08		& 97.25$\pm$06.55		& 95.14$\pm$04.07		& 90.40$\pm$06.39	& \bf{99.38$\pm$01.14}	\\
			7	& 95.65$\pm$06.03		& 96.79$\pm$02.24		& 97.33$\pm$02.63		& 93.51$\pm$05.66		& 38.53$\pm$05.85	& 59.00$\pm$09.51		& 96.48$\pm$06.69		& 90.24$\pm$06.90		& 92.66$\pm$05.58	& \bf{99.13$\pm$01.05}	\\
			8	& 91.97$\pm$06.28		& 95.94$\pm$01.31		& 97.05$\pm$02.19		& 91.37$\pm$04.63		& 36.14$\pm$03.36	& 61.64$\pm$05.38		& 93.68$\pm$08.43		& 91.63$\pm$05.58		& 94.27$\pm$05.43	& \bf{99.12$\pm$00.89}	\\
			9	& 91.87$\pm$04.43		& 94.16$\pm$01.61		& 94.63$\pm$01.25		& 89.94$\pm$04.77		& 35.96$\pm$03.10	& 61.37$\pm$11.03		& 95.30$\pm$07.35		& 90.73$\pm$07.06		& 88.83$\pm$09.06	& \bf{99.01$\pm$00.76}	\\
			10	& 84.04					& 86.85					& 95.77					& 92.02					& 33.80				& 57.75					& 97.65					& 95.77					& 95.77				& \textbf{100.0}		\\	\hline
			Average	
			& 95.10					& 96.37					& 98.02					& 94.99					& 44.36				& 70.51					& 97.64					& 95.27					& 94.21				& \bf{99.51}			\\	\hline
			\hline
			\multirow{2}{1cm}{\centering N}& \multicolumn{10}{c|}{Normalized Mutual Information (\%)}\\	
			\cline{2-11}
			\multirow{2}{1cm}{}
			& K-Means  				& PCA					& RPCA		 			& 2DPCA					& NMF 				& SC					& GNMF 					& RMNMF					& Semi-NMF			& TS-NMF	\\ \hline
			2	& \bf{100.0$\pm$00.00}	& \bf{100.0$\pm$00.00}	& \bf{100.0$\pm$00.00}	& \bf{100.0$\pm$00.00}	& 13.90$\pm$14.09	& \bf{100.00$\pm$00.00}	& \bf{100.0$\pm$00.00}	& \bf{100.0$\pm$00.00}	& 98.55$\pm$04.59	& \bf{100.0$\pm$00.00}	\\
			3	& 94.88$\pm$05.88		& 95.46$\pm$06.09		& \bf{100.0$\pm$00.00}	& 94.88$\pm$05.88		& 20.02$\pm$11.58	& 70.23$\pm$19.02		& 99.41$\pm$01.87		& 92.02$\pm$05.91		& 88.52$\pm$13.91	& 99.41$\pm$01.87		\\
			4	& 98.37$\pm$04.11		& 98.37$\pm$04.11		& 98.49$\pm$04.79		& 95.65$\pm$10.08		& 30.35$\pm$08.46	& 65.14$\pm$10.88		& 98.56$\pm$03.05		& 96.98$\pm$03.88		& 91.22$\pm$10.90	& \bf{98.89$\pm$02.55}	\\
			5	& 97.32$\pm$03.90		& 97.32$\pm$03.90		& \bf{98.05$\pm$04.13}	& 97.08$\pm$03.79		& 29.28$\pm$06.18	& 67.91$\pm$08.45		& 98.30$\pm$02.75		& 95.01$\pm$05.29		& 92.12$\pm$07.16	& \bf{99.03$\pm$01.69}	\\
			6	& 94.15$\pm$04.51		& 94.80$\pm$03.15		& \bf{99.13$\pm$02.74}	& 90.33$\pm$05.49		& 27.82$\pm$06.28	& 64.58$\pm$09.68		& 97.53$\pm$03.78		& 91.76$\pm$05.45		& 87.63$\pm$06.76	& 98.80$\pm$02.15		\\
			7	& 94.87$\pm$04.19		& 94.64$\pm$03.40		& 96.29$\pm$03.44		& 92.08$\pm$04.41		& 29.38$\pm$06.35	& 59.13$\pm$03.78		& 96.60$\pm$04.28		& 87.12$\pm$05.60		& 90.93$\pm$05.83	& \bf{98.40$\pm$01.85}	\\
			8	& 90.91$\pm$05.29		& 93.58$\pm$01.91		& 95.96$\pm$02.93		& 89.95$\pm$03.31		& 29.02$\pm$03.45	& 65.20$\pm$04.20		& 91.20$\pm$03.97		& 89.09$\pm$05.20		& 93.00$\pm$03.96	& \bf{98.52$\pm$01.44}	\\
			9	& 90.86$\pm$03.11		& 91.68$\pm$02.16		& 93.53$\pm$01.75		& 88.37$\pm$03.53		& 31.15$\pm$03.24	& 64.05$\pm$09.52		& 94.06$\pm$03.36		& 89.34$\pm$05.09		& 89.22$\pm$06.95	& \bf{98.34$\pm$01.03}	\\
			10	& 82.68					& 86.07					& 94.16					& 90.20					& 29.75				& 66.82					& 96.50					& 93.54					& 93.38				& \bf{100.0}			\\	\hline
			Average	
			& 93.78					& 94.66					& 97.29					& 93.17					& 26.74				& 69.23					& 96.91					& 92.76					& 91.62				& \bf{99.04}			\\	\hline
		\end{tabular}
	}
	\label{tab_per_jaffe}
\end{table*}

\begin{table*}[!tb] 
	\Huge
	\centering
	\caption{Clustering Performance on PIX }
	\resizebox{0.8\textwidth}{!}{
		\begin{tabular}{ |c| |c |c |c |c |c |c | c| c| c| c| }
			\hline
			\multirow{2}{1cm}{\centering N}& \multicolumn{10}{c|}{Accuracy (\%)}\\	
			\cline{2-11}
			\multirow{2}{1cm}{}
			& K-Means  			& PCA				& RPCA		 		& 2DPCA					& NMF 				& SC				& GNMF 				& RMNMF				& Semi-NMF			& TS-NMF	\\\hline
			2	& 94.50$\pm$10.39	& 94.50$\pm$10.39	& 99.50$\pm$01.58	& 99.50$\pm$01.58		& 73.00$\pm$11.11	& 94.50$\pm$10.39	& 95.50$\pm$08.32	& 96.50$\pm$07.84	& 94.00$\pm$10.22	& \bf{100.0$\pm$00.00}	\\
			3	& 96.00$\pm$05.84	& 96.00$\pm$05.84	& 96.00$\pm$05.84	& 97.67$\pm$06.30		& 60.67$\pm$09.27	& 95.00$\pm$06.89	& 96.00$\pm$05.84	& 97.33$\pm$03.06	& 95.33$\pm$06.13	& \bf{99.00$\pm$01.61}	\\
			4	& 96.25$\pm$04.60	& 97.25$\pm$03.26	& 97.25$\pm$03.81	& 99.25$\pm$01.69		& 59.25$\pm$12.47	& 90.50$\pm$13.58	& 97.50$\pm$03.73	& 96.50$\pm$04.44	& 92.25$\pm$06.17	& \bf{99.75$\pm$00.79}	\\
			5	& 87.20$\pm$11.00	& 93.00$\pm$05.35	& 90.80$\pm$09.34	& 95.40$\pm$08.17		& 55.00$\pm$10.38	& 71.80$\pm$13.62	& 92.80$\pm$06.12	& 90.80$\pm$07.50	& 86.20$\pm$11.09	& \bf{98.80$\pm$01.69}	\\
			6	& 84.83$\pm$12.38	& 88.83$\pm$09.20	& 90.17$\pm$09.51	& 90.00$\pm$11.92		& 47.33$\pm$05.89	& 72.83$\pm$12.74	& 93.17$\pm$04.68	& 89.00$\pm$08.72	& 86.67$\pm$10.60	& \bf{98.83$\pm$01.69}	\\
			7	& 84.14$\pm$05.89	& 90.86$\pm$07.95	& 90.27$\pm$07.64	& 94.86$\pm$05.64		& 51.29$\pm$06.48	& 65.57$\pm$08.31	& 93.29$\pm$06.28	& 87.14$\pm$07.85	& 91.57$\pm$05.85	& \bf{97.57$\pm$02.52}	\\
			8	& 84.12$\pm$05.65	& 85.50$\pm$06.40	& 87.25$\pm$0714	& 95.25$\pm$02.27		& 51.38$\pm$03.79	& 58.00$\pm$06.07	& 84.12$\pm$05.39	& 82.37$\pm$05.38	& 84.50$\pm$06.35	& \bf{97.13$\pm$01.56}	\\
			9	& 83.22$\pm$07.45	& 90.67$\pm$03.32	& 92.89$\pm$00.57	& \bf{96.44$\pm$00.70}	& 45.33$\pm$05.02	& 57.11$\pm$04.03	& 90.56$\pm$06.14	& 87.00$\pm$06.83	& 81.00$\pm$04.81	& 96.11$\pm$01.08		\\
			10	& 80.00				& 89.00				& 80.00				& 87.00					& 11.00				& 62.00				& 92.00				& 81.00				& 81.00				& \bf{98.00}			\\	\hline
			Average	
			& 87.81				& 91.73				& 91.57				& 95.04					& 50.47				& 74.15				& 92.77				& 89.74				& 88.06				& \bf{98.35}			\\	\hline
			\hline
			\multirow{2}{1cm}{\centering N}& \multicolumn{10}{c|}{Normalized Mutual Information (\%)}\\	
			\cline{2-11}
			\multirow{2}{1cm}{}
			& K-Means  			& PCA				& RPCA		 		& 2DPCA				& NMF 				& SC				& GNMF 				& RMNMF				& Semi-NMF			& TS-NMF	\\\hline
			2	& 83.81$\pm$28.77	& 83.81$\pm$28.77	& 97.58$\pm$07.64	& 97.58$\pm$07.64	& 25.16$\pm$21.49	& 83.81$\pm$28.77	& 85.50$\pm$25.35	& 88.28$\pm$22.45	& 81.39$\pm$28.27	& \bf{100.0$\pm$00.00}	\\
			3	& 89.87$\pm$11.64	& 89.87$\pm$11.64	& 89.87$\pm$11.64	& 94.95$\pm$12.80	& 34.04$\pm$13.10	& 88.11$\pm$14.96	& 89.87$\pm$11.64	& 92.32$\pm$08.18	& 88.66$\pm$11.81	& \bf{96.95$\pm$04.91}	\\
			4	& 93.39$\pm$07.13	& 94.65$\pm$05.05	& 94.67$\pm$05.78	& 98.42$\pm$03.44	& 42.33$\pm$16.08	& 86.82$\pm$16.72	& 95.26$\pm$05.31	& 93.45$\pm$07.08	& 87.96$\pm$08.03	& \bf{99.40$\pm$01.91}	\\
			5	& 87.42$\pm$08.35	& 90.76$\pm$06.83	& 90.13$\pm$07.54	& 93.60$\pm$10.71	& 47.52$\pm$10.15	& 67.15$\pm$14.69	& 90.48$\pm$08.38	& 88.04$\pm$07.35	& 85.79$\pm$08.81	& \bf{97.82$\pm$02.94}	\\
			6	& 86.71$\pm$08.47	& 88.51$\pm$06.90	& 90.64$\pm$05.68	& 91.12$\pm$08.83	& 45.19$\pm$07.04	& 69.75$\pm$11.39	& 91.00$\pm$05.85	& 87.05$\pm$07.23	& 86.55$\pm$08.30	& \bf{97.82$\pm$01.51}	\\
			7	& 86.74$\pm$03.63	& 92.16$\pm$04.45	& 91.54$\pm$04.57	& 93.92$\pm$04.67	& 49.54$\pm$08.12	& 64.08$\pm$09.35	& 92.34$\pm$04.69	& 87.06$\pm$07.05	& 90.98$\pm$03.96	& \bf{96.66$\pm$03.11}	\\
			8	& 85.70$\pm$03.87	& 88.28$\pm$02.80	& 90.56$\pm$03.07	& 93.29$\pm$02.72	& 49.33$\pm$03.90	& 60.00$\pm$05.12	& 87.44$\pm$02.03	& 83.54$\pm$04.15	& 86.50$\pm$03.52	& \bf{95.91$\pm$02.20}	\\
			9	& 86.31$\pm$04.83	& 90.06$\pm$02.78	& 92.78$\pm$00.82	& 94.76$\pm$00.94	& 45.26$\pm$03.24	& 60.67$\pm$03.91	& 91.43$\pm$02.90	& 87.89$\pm$04.59	& 85.27$\pm$03.48	& \bf{95.36$\pm$01.18}	\\
			10	& 88.09				& 90.25				& 87.01				& 92.52				& 11.73				& 62.45				& 91.57				& 86.02				& 86.84				& \bf{97.65}			\\	\hline
			Average	
			& 87.56				& 89.82				& 91.64				& 94.46				& 38.90				& 71.43				& 90.54				& 88.18				& 86.66				& \bf{97.51}			\\	\hline
		\end{tabular}
	}
	\label{tab_per_pix10}
\end{table*}

\begin{table*}[!tb] 
	\Huge
	\centering
	\caption{Clustering Performance on Yale }
	\resizebox{0.8\textwidth}{!}{
		\begin{tabular}{ |c| |c |c |c |c |c |c | c| c| c| c| }
			\hline
			\multirow{2}{1cm}{\centering N}& \multicolumn{10}{c|}{Accuracy (\%)}\\	
			\cline{2-11}
			\multirow{2}{1cm}{}
			& K-Means  			& PCA				& RPCA		 			& 2DPCA				& NMF 				& SC				& GNMF 				& RMNMF				& Semi-NMF				& TS-NMF	\\ \hline
			2	& 71.36$\pm$22.78	& 84.09$\pm$17.70	& 73.64$\pm$24.97		& 76.82$\pm$22.62	& 53.64$\pm$04.18	& 71.36$\pm$19.05	& 78.64$\pm$10.06	& 86.36$\pm$13.03	& 58.64$\pm$09.45		& \bf{86.82$\pm$08.42}	\\
			3	& 61.82$\pm$15.20	& 67.58$\pm$15.59	& 63.33$\pm$21.40		& 65.45$\pm$18.75	& 52.12$\pm$04.24	& 52.12$\pm$13.08	& 62.12$\pm$13.50	& 72.12$\pm$12.10	& 67.58$\pm$13.40		& \bf{73.03$\pm$13.51}	\\
			4	& 46.14$\pm$11.54	& 57.95$\pm$10.06	& 49.55$\pm$11.33		& 48.86$\pm$11.10	& 40.45$\pm$07.93	& 44.09$\pm$06.96	& 61.36$\pm$04.91	& 60.45$\pm$10.45	& \bf{65.91$\pm$10.39}	& {65.23$\pm$13.85}	\\
			5	& 50.00$\pm$08.45	& 55.09$\pm$08.13	& 53.27$\pm$09.35		& 54.36$\pm$19.59	& 39.82$\pm$05.65	& 50.73$\pm$16.47	& 57.27$\pm$10.37	& 58.73$\pm$11.37	& 60.18$\pm$09.90		& \bf{64.18$\pm$07.76}	\\
			6	& 50.61$\pm$04.91	& 49.55$\pm$03.78	& 53.18$\pm$03.53		& 52.12$\pm$03.44	& 33.64$\pm$03.90	& 37.58$\pm$07.92	& 49.24$\pm$03.86	& 51.36$\pm$04.07	& 52.73$\pm$03.56		& \bf{55.45$\pm$08.30}	\\
			7	& 47.27$\pm$06.57	& 51.30$\pm$04.55	& 54.55$\pm$04.37		& 51.95$\pm$05.09	& 34.81$\pm$06.12	& 39.61$\pm$11.19	& 50.78$\pm$04.08	& 52.73$\pm$04.02	& 52.99$\pm$04.57		& \bf{59.09$\pm$08.83}	\\
			8	& 48.30$\pm$05.55	& 51.70$\pm$05.57	& 53.98$\pm$04.92		& 52.61$\pm$06.16	& 32.95$\pm$05.59	& 37.73$\pm$10.67	& 51.14$\pm$02.62	& 49.89$\pm$04.68	& 49.89$\pm$07.37		& \bf{57.73$\pm$08.09}	\\
			9	& 45.45$\pm$07.68	& 48.48$\pm$06.46	& \bf{54.44$\pm$02.67}	& 48.28$\pm$05.92	& 30.00$\pm$03.20	& 32.53$\pm$06.28	& 48.69$\pm$04.74	& 50.30$\pm$06.85	& 47.58$\pm$03.91		& 53.33$\pm$07.31		\\
			10	& 43.36$\pm$04.29	& 45.18$\pm$04.04	& 50.00$\pm$04.56		& 46.45$\pm$04.91	& 28.73$\pm$02.36	& 28.91$\pm$05.71	& 46.27$\pm$05.22	& 46.73$\pm$03.81	& 46.64$\pm$06.17		& \bf{51.36$\pm$05.97}	\\
			12	& 40.83$\pm$04.23	& 45.08$\pm$04.79	& \bf{49.92$\pm$05.74}	& 46.44$\pm$03.03	& 27.80$\pm$01.43	& 30.30$\pm$03.50	& 44.55$\pm$06.84	& 45.08$\pm$06.04	& 44.17$\pm$04.72		& \bf{49.92$\pm$02.98}	\\	
			14	& 41.75$\pm$02.82	& 42.92$\pm$03.25	& 45.65$\pm$02.87		& 46.04$\pm$02.09	& 25.06$\pm$00.98	& 25.52$\pm$03.02	& 42.79$\pm$03.89	& 44.61$\pm$03.33	& 42.92$\pm$04.02		& \bf{49.81$\pm$04.14}	\\	
			15	& 39.39				& 41.21				& 44.24					& 42.42				& 21.82				& 21.82				& 43.03				& 44.85				& 36.36					& \bf{53.33}			\\	\hline
			Average		
			& 48.88				& 53.34				& 53.81					& 52.65				& 35.07				& 39.36				& 53.00				& 55.27				& 52.13					& \bf{59.94}			\\	\hline
			\hline
			\multirow{2}{1cm}{\centering N}& \multicolumn{10}{c|}{Normalized Mutual Information (\%)}\\	
			\cline{2-11}
			\multirow{2}{1cm}{}
			& K-Means  			& PCA				& RPCA		 			& 2DPCA				& NMF 				& SC				& GNMF 				& RMNMF					&	Semi-NMF			& TS-NMF	\\ \hline
			2	& 35.14$\pm$42.14	& 53.01$\pm$34.53	& 43.23$\pm$46.31		& 44.37$\pm$41.38	& 01.15$\pm$01.90	& 30.43$\pm$29.47	& 33.70$\pm$17.53	& \bf{52.87$\pm$29.87}	& 05.03$\pm$06.37		& 51.94$\pm$23.05	\\
			3	& 33.94$\pm$22.01	& 45.06$\pm$23.93	& 43.87$\pm$31.44		& 41.45$\pm$28.65	& 18.91$\pm$10.05	& 25.24$\pm$20.95	& 40.60$\pm$22.19	& 46.94$\pm$18.56		& 47.76$\pm$17.66		& \bf{48.77$\pm$23.49}	\\
			4	& 23.75$\pm$16.61	& 37.70$\pm$14.89	& 33.70$\pm$15.98		& 27.19$\pm$15.06	& 14.95$\pm$10.54	& 21.25$\pm$11.23	& 42.22$\pm$09.13	& 39.72$\pm$12.17		& \bf{48.23$\pm$13.68}	& 47.58$\pm$15.73	\\
			5	& 35.68$\pm$14.72	& 43.57$\pm$12.14	& 44.13$\pm$12.21		& 39.87$\pm$26.15	& 23.49$\pm$09.30	& 33.17$\pm$22.40	& 43.03$\pm$13.06	& 44.78$\pm$15.32		& 46.26$\pm$15.40		& \bf{49.17$\pm$09.44}	\\
			6	& 40.74$\pm$07.93	& 38.17$\pm$07.05	& 44.67$\pm$05.95		& 42.32$\pm$07.11	& 20.54$\pm$07.17	& 21.77$\pm$12.82	& 37.43$\pm$04.82	& 38.37$\pm$06.02		& 40.97$\pm$04.98		& \bf{45.74$\pm$10.66}	\\
			7	& 39.44$\pm$07.78	& 44.18$\pm$04.96	& 48.63$\pm$06.08		& 46.31$\pm$05.75	& 27.41$\pm$08.44	& 29.02$\pm$16.15	& 42.54$\pm$05.36	& 43.89$\pm$06.09		& 45.57$\pm$03.90		& \bf{52.49$\pm$11.04}	\\
			8	& 43.51$\pm$05.72	& 46.22$\pm$07.16	& 49.65$\pm$06.68		& 47.97$\pm$08.00	& 26.32$\pm$06.87	& 33.19$\pm$14.25	& 45.36$\pm$04.73	& 44.85$\pm$05.35		& 45.41$\pm$09.02		& \bf{54.19$\pm$09.36}	\\
			9	& 42.94$\pm$08.06	& 45.36$\pm$05.35	& \bf{51.76$\pm$03.55}	& 46.80$\pm$06.43	& 26.60$\pm$05.05	& 25.19$\pm$08.62	& 44.46$\pm$04.74	& 45.97$\pm$06.09		& 44.40$\pm$04.24		& 50.64$\pm$08.36		\\
			10	& 43.73$\pm$03.17	& 45.12$\pm$03.42	& {49.02$\pm$05.39}		& 46.39$\pm$05.34	& 27.50$\pm$03.24	& 23.94$\pm$06.23	& 43.89$\pm$04.62	& 44.59$\pm$04.10		& 45.39$\pm$05.70		& \bf{50.39$\pm$06.97}	\\
			12	& 42.83$\pm$04.26	& 46.61$\pm$03.63	& {51.76$\pm$02.82}		& 50.24$\pm$02.37	& 30.29$\pm$01.43	& 31.53$\pm$05.45	& 46.40$\pm$05.74	& 46.56$\pm$04.94		& 45.69$\pm$04.26		& \bf{53.50$\pm$03.58}	\\
			14	& 46.20$\pm$02.59	& 47.46$\pm$02.20	& {50.86$\pm$02.13}		& 51.36$\pm$01.95	& 30.63$\pm$00.76	& 28.78$\pm$04.08	& 46.50$\pm$03.59	& 47.77$\pm$02.04		& 46.18$\pm$02.98		& \bf{53.06$\pm$03.16}	\\
			15	& 43.84				& 47.86				& 49.64					& 49.36				& 29.25				& 26.86				& 48.34				& 48.20					& 40.79					& \bf{55.21}			\\	\hline
			Average		
			& 39.31				& 45.03				& 46.74					& 44.47				& 23.09				& 27.53				& 42.87				& 45.38					& 41.81					& \bf{51.06}			\\	\hline
		\end{tabular}
	}
	\label{tab_per_yaleb15}
\end{table*}

\subsection{Algorithms in Comparison}
To illustrate the effectiveness of TS-NMF, we compare it with 9 methods as baselines. We summarize the methods and settings as follows: 1) \textbf{K-means}. It is one of the most widely used clustering algorithm, where a fast implementation\footnote{\scriptsize\url{http://www.cad.zju.edu.cn/home/dengcai/Data/Clustering.html}} is used in the experiment. It is also used as a final step of other methods. 2) \textbf{PCA} \cite{jolliffe2002principal}. We seek $r$ principal components with the most variations, with $r$ chosen from $\{1,3,5,7,9\}$. 3) \textbf{Robust PCA} (RPCA) \cite{candes2011robust}. Particularly, we use the inexact augmented Lagrange multiplier (IALM) method \cite{lin2010augmented} for its optimization, where the theoretically optimal parameter provided in \cite{candes2011robust} is used. 4) \textbf{2DPCA} \cite{yang2004two}. We choose different $r$ values in $\{1,3,5,7,9\}$ as the number of projection directions. 5) \textbf{SC} \cite{ng2002spectral}. RBF kernel is used to construct the graph Laplacian with radius chosen from $\mathcal{S}=\{10^{-3},10^{-2},10^{-1},10^{0},10^{1},10^{2},10^{3}\}$. 6) \textbf{NMF} \cite{lee1999learning}. It is closely related with K-means and SC. The standard multiplicative update rules are used. 7) \textbf{GNMF} \cite{cai2011graph}. Frobenius-norm based loss function is used. 8) \textbf{RMNMF} \cite{huang2014robust}. As suggested by the original paper, we fix $\rho$ and $\mu$ to be 1E-5 and 1.1 in ALM framework for its optimization. 9) \textbf{Semi-NMF} \cite{ding2010convex}. It is more general than NMF by relaxing the basis with mixed signs. 10) \textbf{TS-NMF}. We choose $r$ from $\{1,3,5,7,9\}$. Together with GNMF and RMNMF, the regularization parameter is chosen from $\mathcal{S}$, and the graph Laplacian is constructed using binary weighting with 5 neighbors.

\subsection{Clustering Performance}
In this subsection, we present the clustering performance in detail. For a given data set, the total number of clusters is $\bar{N}$. For more detailed comparison, we randomly select $N\le\bar{N}$ clusters as a subset to conduct experiments. Different data sets may have different sets of $N$ values, thus we present the detailed information in the first column of \cref{tab_per_eyaleb,tab_per_jaffe,tab_per_orl,tab_per_pix10,tab_per_semeion,tab_per_yaleb15}. For a specific data set and algorithm, we conduct experiments on 10 randomly selected subsets for each $N$ value with all combinations of parameters tested. Then we report the average performance as well as the standard devision for each $N$ value. The clustering results are presented in \cref{tab_per_eyaleb,tab_per_jaffe,tab_per_orl,tab_per_pix10,tab_per_semeion,tab_per_yaleb15}.

\begin{figure*}[htbp]
\centering

\includegraphics[width=2\columnwidth]{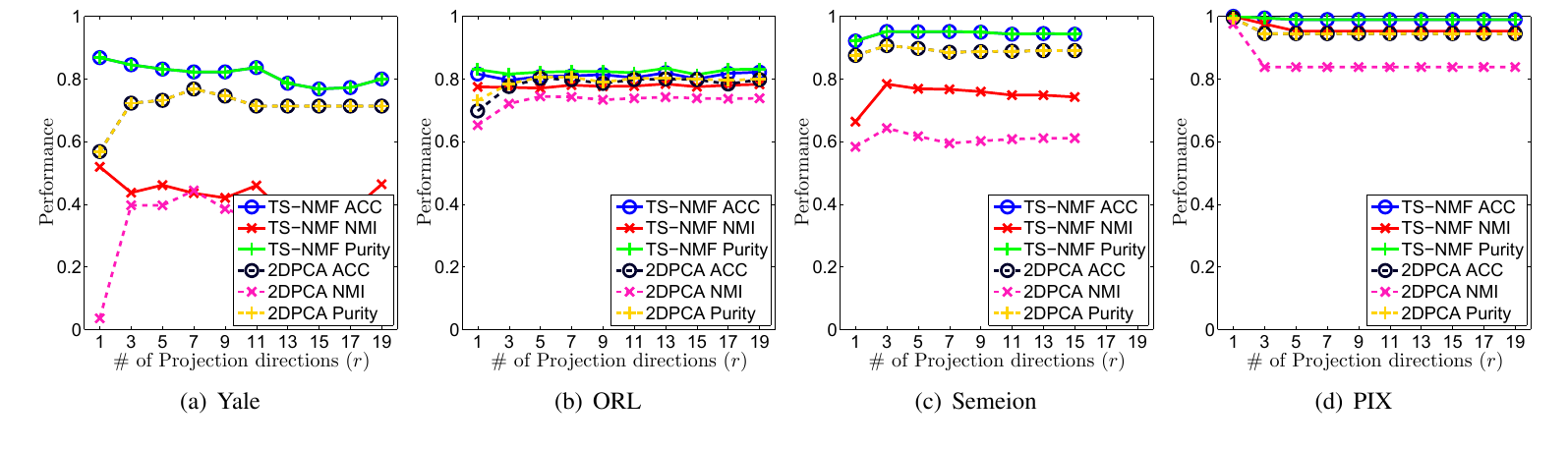}

\caption{ Performance variations in accuracy, NMI, and purity with respect to different $r$ values on Yale, ORL, Semeion, and PIX data sets, respectively. }
\label{fig_r}
\end{figure*}

It is seen that the proposed TS-NMF significantly outperforms the other methods. For example, on EYaleB data, TS-NMF improves the performance from GNMF, the second best, by around 40\% in accuracy, NMI, and purity. It is also noted that EYaleB and ORL data are highly noisy and afflicted with outliers due to the shadows and wearings on faces, hence the results suggest TS-NMF can better deal with noisy data. We also observe good, yet less competitive, results from other methods. For example, on JAFFE data, RPCA has good performance, but its performance degrades more significantly than TS-NMF when $N$ increases, hence TS-NMF shows better stability with large $N$ values and is more suitable for real world applications. Also, it is observed that different methods have achieved the second best performance on different data sets, whereas TS-NMF always has the best. This observation implies that the other methods may have good performance on some data sets, but may do poorly on the others. The stability of TS-NMF on different data sets suggests its high potential for real world applications.

\subsection{Clustering Performance on Corrupted Data}
As mentioned in previous subsection, TS-NMF has shown its effectiveness on noisy data. To better illustrate this, in this test, we compare TS-NMF with other methods on corrupted ORL data, with randomly 40\%, and 60\% entries removed, respectively. The experimental setting is the same as before. From \cref{tab_per_orl_r40,tab_per_orl_r60}, it is observed that TS-NMF has the best performance with significant improvements, which, again, confirms the robustness of TS-NMF.

\begin{table*}[!tb] 
	\normalsize
	\centering
	\caption{ Clustering Performance on 40\% Corrupted ORL }
	\resizebox{0.8\textwidth}{!}{
		\begin{tabular}{ |c| |c |c |c | c |c |c | c| c| c| c| }
			\hline
			\multirow{2}{1cm}{\centering N}& \multicolumn{10}{c|}{Accuracy (\%)}\\	
			\cline{2-11}
			\multirow{2}{1cm}{}
			& K-Means  			& PCA		 		& RPCA	 			& 2DPCA				& SC				& NMF 				& GNMF 				& RMNMF				& Semi-NMF			& TS-NMF				\\ \hline
			2	& 78.50$\pm$15.82	& 77.00$\pm$15.49	& 65.50$\pm$09.85 	& 77.50$\pm$20.03	& ---------------	& 61.00$\pm$06.58	& 85.50$\pm$14.99	& 76.50$\pm$16.67	& 87.00$\pm$15.49	& \bf{90.00$\pm$15.46}	\\
			4	& 49.75$\pm$09.68	& 37.50$\pm$04.71	& 49.50$\pm$06.65	& 59.00$\pm$10.08	& ---------------	& 38.75$\pm$04.45	& 62.25$\pm$07.31	& 53.50$\pm$10.01	& 56.50$\pm$07.75	& \bf{67.00$\pm$10.92}	\\
			6	& 36.33$\pm$07.89	& 31.83$\pm$04.87	& 43.67$\pm$07.28	& 48.17$\pm$12.36	& ---------------	& 31.67$\pm$02.48	& 53.33$\pm$08.82	& 45.50$\pm$10.45	& 52.83$\pm$11.79	& \bf{59.00$\pm$10.22}	\\
			8	& 30.38$\pm$04.72	& 27.62$\pm$03.75	& 41.25$\pm$05.56	& 42.75$\pm$06.42	& ---------------	& 29.75$\pm$04.16	& 42.63$\pm$06.22	& 34.25$\pm$04.87	& 41.25$\pm$05.77	& \bf{51.50$\pm$07.40}	\\
			10	& 29.27$\pm$03.58	& 27.10$\pm$03.96	& 34.50$\pm$04.22	& 39.40$\pm$05.13	& ---------------	& 11.00$\pm$00.00	& 39.30$\pm$05.68	& 33.10$\pm$03.60	& 34.90$\pm$05.45	& \bf{48.30$\pm$06.27}	\\
			12	& 27.42$\pm$02.40	& 24.17$\pm$01.52	& 33.50$\pm$04.02	& 34.50$\pm$04.93	& ---------------	& 23.33$\pm$01.80	& 35.58$\pm$02.91	& 30.75$\pm$02.65	& 33.17$\pm$02.63	& \bf{45.58$\pm$05.47}	\\
			14	& 25.50$\pm$02.78	& 24.07$\pm$03.11	& 32.64$\pm$03.63	& 36.21$\pm$04.73	& ---------------	& 23.29$\pm$01.31	& 34.50$\pm$05.60	& 28.50$\pm$02.98	& 33.50$\pm$03.83	& \bf{45.07$\pm$04.70}	\\
			16	& 23.81$\pm$02.80	& 22.25$\pm$01.82	& 29.25$\pm$02.55	& 33.88$\pm$03.06	& ---------------	& 21.69$\pm$00.84	& 30.50$\pm$03.18	& 25.94$\pm$01.57	& 30.75$\pm$03.26	& \bf{40.00$\pm$04.81}	\\
			18	& 23.06$\pm$02.03	& 21.72$\pm$01.90	& 30.72$\pm$03.74	& 32.06$\pm$02.80	& ---------------	& 21.39$\pm$01.29	& 32.06$\pm$03.44	& 25.83$\pm$01.58	& 28.83$\pm$00.92	& \bf{36.56$\pm$03.51}	\\
			20	& 22.10$\pm$02.00	& 20.55$\pm$01.30	& 27.70$\pm$03.10	& 31.95$\pm$03.07	& ---------------	& 19.85$\pm$01.36	& 31.25$\pm$03.55	& 26.35$\pm$01.43	& 28.05$\pm$01.46	& \bf{37.35$\pm$03.01}	\\	\hline
			Average	
			& 34.61				& 31.38				& 35.57				& 43.54				& ---------------	& 25.84				& 44.79				& 38.02				& 42.68				& \bf{52.04}	\\	\hline
			\hline
			\multirow{2}{1cm}{\centering N}& \multicolumn{10}{c|}{Normalized Mutual Information (\%)}\\	
			\cline{2-11}
			\multirow{2}{1cm}{}
			& K-Means  			& PCA		 		& RPCA	 			& 2DPCA				& SC				& NMF 				& GNMF 				& RMNMF				& Semi-NMF			& TS-NMF				\\ \hline
			2	& 43.76$\pm$35.61	& 39.35$\pm$26.76	& 12.74$\pm$12.09	& 43.50$\pm$44.67	& ---------------	& 07.33$\pm$09.71	& 55.76$\pm$37.73	& 34.46$\pm$33.07	& 61.98$\pm$42.68	& \bf{72.07$\pm$39.04}	\\
			4	& 26.82$\pm$12.41	& 11.52$\pm$04.32	& 24.67$\pm$06.12	& 38.37$\pm$12.68	& ---------------	& 11.68$\pm$04.82	& 44.35$\pm$11.61	& 32.00$\pm$12.80	& 42.35$\pm$11.29	& \bf{59.04$\pm$15.12}	\\
			6	& 20.64$\pm$09.85	& 16.11$\pm$05.78	& 30.53$\pm$09.59	& 35.82$\pm$13.95	& ---------------	& 15.68$\pm$03.16	& 44.53$\pm$13.59	& 31.24$\pm$10.86	& 43.31$\pm$15.06	& \bf{54.98$\pm$14.69}	\\
			8	& 22.45$\pm$06.22	& 18.02$\pm$04.77	& 36.09$\pm$06.28	& 36.88$\pm$07.35	& ---------------	& 20.69$\pm$03.54	& 38.87$\pm$08.70	& 26.83$\pm$06.68	& 34.57$\pm$05.81	& \bf{47.83$\pm$09.88}	\\
			10	& 25.37$\pm$04.16	& 20.30$\pm$03.89	& 33.73$\pm$04.38	& 38.11$\pm$05.97	& ---------------	& 11.73$\pm$00.00	& 39.79$\pm$05.90	& 29.77$\pm$04.75	& 35.12$\pm$05.17	& \bf{51.47$\pm$06.31}	\\
			12	& 26.16$\pm$03.84	& 21.50$\pm$02.02	& 35.80$\pm$03.90	& 36.13$\pm$06.41	& ---------------	& 24.13$\pm$01.96	& 39.77$\pm$02.93	& 30.12$\pm$02.11	& 36.74$\pm$02.63	& \bf{50.76$\pm$07.10}	\\
			14	& 26.10$\pm$02.79	& 23.45$\pm$03.20	& 38.26$\pm$03.47	& 41.14$\pm$05.66	& ---------------	& 27.07$\pm$02.20	& 40.85$\pm$04.35	& 30.89$\pm$03.02	& 39.70$\pm$03.34	& \bf{52.71$\pm$05.10}	\\
			16	& 27.18$\pm$02.80	& 25.38$\pm$02.34	& 35.31$\pm$02.43	& 41.68$\pm$03.10	& ---------------	& 27.75$\pm$00.68	& 39.21$\pm$03.88	& 31.74$\pm$02.58	& 38.21$\pm$03.12	& \bf{50.33$\pm$04.77}	\\
			18	& 27.78$\pm$01.76	& 25.08$\pm$01.77	& 38.96$\pm$03.73	& 42.12$\pm$02.52	& ---------------	& 29.92$\pm$01.17	& 41.48$\pm$02.99	& 33.37$\pm$01.92	& 38.62$\pm$01.02	& \bf{49.06$\pm$03.72}	\\
			20	& 28.41$\pm$02.30	& 26.77$\pm$02.10	& 37.79$\pm$03.30	& 43.96$\pm$03.97	& ---------------	& 30.09$\pm$01.33	& 43.16$\pm$02.50	& 36.87$\pm$01.33	& 39.21$\pm$01.79	& \bf{50.44$\pm$03.06}	\\	\hline
			Average	
			& 27.47				& 22.75				& 32.39				& 39.77				& ---------------	& 20.61				& 42.78				& 31.74				& 40.98				& \bf{53.87}	\\	\hline
		\end{tabular}
	}
	\label{tab_per_orl_r40}
\end{table*}

\begin{table*}[!tb]
	\normalsize
	\centering
	\caption{ Clustering Performance on 60\% Corrupted ORL }
	\resizebox{0.8\textwidth}{!}{
		\begin{tabular}{ |c| |c |c |c | c |c |c | c| c| c| c| }
			\hline
			\multirow{2}{1cm}{\centering N}& \multicolumn{10}{c|}{Accuracy (\%)}\\	
			\cline{2-11}
			\multirow{2}{1cm}{}
			& K-Means  			& PCA		 		& RPCA	 			& 2DPCA				& SC				& NMF 				& GNMF 				& RMNMF				& Semi-NMF			& TS-NMF				\\ \hline
			2	& 72.50$\pm$13.18	& 59.00$\pm$06.58	& 55.00$\pm$00.00	& 67.00$\pm$12.74	& ---------------	& 59.00$\pm$06.58	& 74.50$\pm$14.99	& 66.50$\pm$11.56	& 73.00$\pm$13.37	& \bf{82.50$\pm$14.77}	\\
			4	& 42.25$\pm$05.83	& 39.50$\pm$05.24	& 27.50$\pm$00.00	& 44.00$\pm$08.51	& ---------------	& 38.00$\pm$03.07	& 47.25$\pm$06.29	& 44.75$\pm$06.06	& 48.50$\pm$08.99	& \bf{55.50$\pm$12.74}	\\
			6	& 33.00$\pm$04.50	& 29.17$\pm$01.42	& 18.33$\pm$00.00	& 37.00$\pm$09.26	& ---------------	& 31.50$\pm$03.80	& 39.67$\pm$07.36	& 35.50$\pm$04.91	& 39.83$\pm$08.80	& \bf{45.17$\pm$05.64}	\\
			8	& 29.25$\pm$01.79	& 25.87$\pm$03.73	& 13.75$\pm$00.00	& 32.25$\pm$03.27	& ---------------	& 28.50$\pm$02.75	& 32.75$\pm$03.28	& 29.13$\pm$02.95	& 35.63$\pm$04.05	& \bf{39.63$\pm$05.53}	\\
			10	& 26.20$\pm$01.69	& 24.70$\pm$02.41	& 11.00$\pm$00.00	& 28.30$\pm$03.56	& ---------------	& 11.00$\pm$00.00	& 29.80$\pm$01.81	& 26.30$\pm$02.36	& 30.60$\pm$03.13	& \bf{35.90$\pm$03.73}	\\
			12	& 23.92$\pm$02.39	& 22.75$\pm$02.72	& 10.00$\pm$00.00	& 27.17$\pm$02.49	& ---------------	& 24.33$\pm$01.88	& 28.33$\pm$03.26	& 24.33$\pm$02.63	& 25.92$\pm$01.59	& \bf{32.67$\pm$03.57}	\\
			14	& 23.36$\pm$01.69	& 22.00$\pm$01.38	& 08.57$\pm$00.00	& 25.21$\pm$02.36	& ---------------	& 22.14$\pm$00.89	& 27.21$\pm$02.24	& 24.14$\pm$02.23	& 26.07$\pm$02.70	& \bf{32.29$\pm$05.00}	\\
			16	& 22.00$\pm$01.21	& 20.50$\pm$01.84	& 07.50$\pm$00.00	& 24.69$\pm$02.29	& ---------------	& 21.25$\pm$01.41	& 25.62$\pm$01.11	& 21.94$\pm$01.65	& 25.19$\pm$02.02	& \bf{30.69$\pm$02.83}	\\
			18	& 20.94$\pm$01.17	& 20.11$\pm$01.41	& 06.67$\pm$06.00	& 24.22$\pm$01.05	& ---------------	& 21.06$\pm$01.21	& 23.89$\pm$01.92	& 21.33$\pm$01.18	& 23.99$\pm$02.10	& \bf{27.89$\pm$02.98}	\\
			20	& 21.10$\pm$01.76	& 19.45$\pm$01.26	& 06.00$\pm$00.00	& 23.65$\pm$02.59	& ---------------	& 20.75$\pm$01.32	& 23.40$\pm$00.94	& 20.85$\pm$01.93	& 22.30$\pm$01.34	& \bf{27.53$\pm$01.53}	\\	\hline
			Average	
			& 31.45				& 28.31				& 16.43				& 33.35				& ---------------	& 27.75				& 35.24				& 31.48				& 35.04				& \bf{40.96}	\\	\hline
			\hline
			\multirow{2}{1cm}{\centering N}& \multicolumn{10}{c|}{Normalized Mutual Information (\%)}\\	
			\cline{2-11}
			\multirow{2}{1cm}{}
			& K-Means  			& PCA		 		& RPCA	 			& 2DPCA				& SC				& NMF 				& GNMF 				& RMNMF				& Semi-NMF			& TS-NMF				\\ \hline
			2	& 26.61$\pm$23.96	& 06.01$\pm$05.74	& 05.19$\pm$00.00	& 17.01$\pm$16.16	& ---------------	& 03.86$\pm$04.01	& 28.85$\pm$25.19	& 14.33$\pm$15.97	& 23.62$\pm$25.78	& \bf{48.49$\pm$33.15}	\\
			4	& 15.79$\pm$08.02	& 11.75$\pm$05.72	& 08.20$\pm$00.00	& 15.80$\pm$08.36	& ---------------	& 07.87$\pm$02.82	& 22.13$\pm$08.67	& 16.33$\pm$08.77	& 24.90$\pm$12.69	& \bf{32.05$\pm$15.32}	\\
			6	& 16.06$\pm$04.27	& 12.74$\pm$01.99	& 09.56$\pm$00.00	& 23.44$\pm$11.20	& ---------------	& 14.05$\pm$03.90	& 25.29$\pm$08.36	& 20.78$\pm$05.58	& 26.91$\pm$09.36	& \bf{33.27$\pm$10.22}	\\
			8	& 20.66$\pm$02.71	& 15.91$\pm$03.60	& 10.60$\pm$00.00	& 23.65$\pm$04.99	& ---------------	& 18.06$\pm$02.48	& 24.63$\pm$03.52	& 18.89$\pm$02.97	& 29.10$\pm$04.98	& \bf{35.03$\pm$07.00}	\\
			10	& 21.07$\pm$02.14	& 18.84$\pm$03.00	& 11.73$\pm$00.00	& 23.88$\pm$04.12	& ---------------	& 11.73$\pm$00.00	& 28.19$\pm$02.44	& 20.44$\pm$03.48	& 28.58$\pm$03.59	& \bf{35.28$\pm$02.54}	\\
			12	& 23.22$\pm$03.14	& 20.43$\pm$03.23	& 12.36$\pm$00.00	& 28.30$\pm$02.50	& ---------------	& 24.64$\pm$02.97	& 30.78$\pm$03.00	& 22.14$\pm$04.06	& 27.77$\pm$01.37	& \bf{35.14$\pm$03.31}	\\
			14	& 25.32$\pm$02.41	& 22.44$\pm$01.69	& 11.96$\pm$00.00	& 29.14$\pm$02.51	& ---------------	& 25.36$\pm$01.62	& 32.34$\pm$02.79	& 24.18$\pm$04.22	& 30.81$\pm$03.14	& \bf{38.18$\pm$05.50}	\\
			16	& 26.09$\pm$01.70	& 23.75$\pm$02.03	& 11.81$\pm$00.00	& 30.63$\pm$02.90	& ---------------	& 26.75$\pm$02.27	& 33.00$\pm$01.88	& 25.45$\pm$01.67	& 32.11$\pm$01.93	& \bf{37.55$\pm$02.41}	\\
			18	& 27.10$\pm$01.42	& 24.88$\pm$01.73	& 11.86$\pm$00.00	& 31.93$\pm$01.43	& ---------------	& 29.55$\pm$01.23	& 33.56$\pm$01.54	& 27.93$\pm$01.71	& 32.38$\pm$02.07	& \bf{37.77$\pm$02.46}	\\
			20	& 28.84$\pm$02.03	& 26.22$\pm$01.29	& 12.13$\pm$00.00	& 33.40$\pm$03.26	& ---------------	& 31.41$\pm$01.48	& 34.70$\pm$01.17	& 28.52$\pm$03.42	& 33.63$\pm$02.15	& \bf{38.96$\pm$01.56}	\\	\hline
			Average	
			& 23.08				& 18.30				& 10.54				& 25.72				& ---------------	& 19.33				& 29.35				& 21.90				& 29.25				& \bf{37.17}	\\	\hline
		\end{tabular}
	}
	\label{tab_per_orl_r60}
\end{table*}

\subsection{Parameter Sensitivity}
In this subsection, we show the effects of the parameters on clustering performance. Due to the space limit, we present the results on part of the data sets. We first compare TS-NMF and 2DPCA by showing their performance variation with respect to $r$. For better illustration, a wider range of values, i.e., $\{1,3,5,\cdots,19\}$, is considered for $r$. We report the results in \cref{fig_exp_per}. For TS-NMF, we tune $\{\lambda_1,\lambda_2\}\in\mathcal{S}\times\mathcal{S}$, such that the best performance is obtained for each fixed $r$ value. It is observed that the proposed method outperforms 2DPCA when the same value of $r$ is used. Also, it is observed that with only a small number of projection directions, TS-NMF can achieve very good performance, which confirms the key idea of this paper and significantly reduces the cost of computations for solving $P$ and $Q$.

Then we test the performance of TS-NMF with respect to different combinations of $\lambda_1$ and $\lambda_2$. We report the performance with $r$ tuned such that the best results are observed. From \cref{fig_exp_per}, it is seen that the performance of TS-NMF is high in a wide range of parameter combinations. Here, we only show the results of ORL and Semeion data, but similar patterns can be observed and similar conclusion can be drawn from other data sets. These observations show insensitivities of TS-NMF to parameters, which indicates its ease of use in real world applications.

%
%

\begin{figure*}[!tb]
\centering

\includegraphics[width=2\columnwidth]{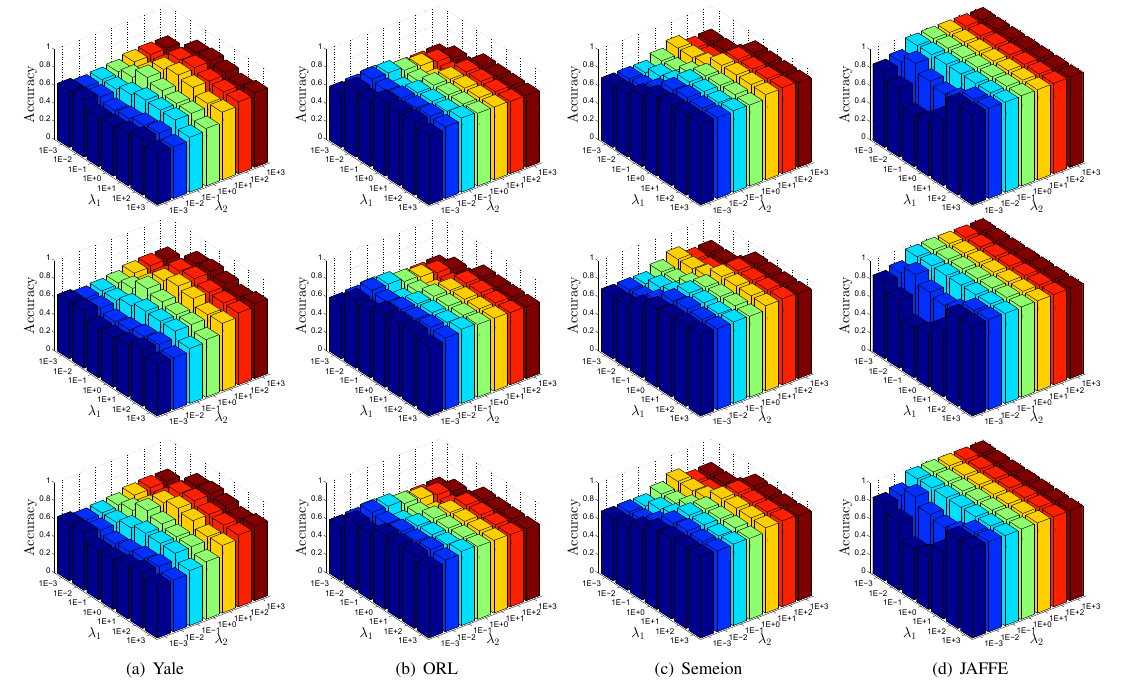}

%
%
%
\caption{ Performance variations in accuracy, NMI, and purity with respect to different combinations of $\lambda_1$ and $\lambda_2$ values on yale, ORL, Semeion, and JAFFE data sets. }
\label{fig_exp_per}
\end{figure*}

%
%

\section{Conclusion}
\label{sec_conclusion}
In this paper, we propose a 2-dimensional semi-nonnegative matrix factorization (TS-NMF) model for clustering. For 2D data, existing methods usually convert the examples to vectors, which fails to fully exploit the spatial information from the data. The proposed method overcomes this limitation by retaining the spatial information of the data for effective data representation. It seeks optimal projection directions under the guidance of new data representations and the goal of clustering. With these projections, a subspace where 2D data are projected can be found with the most expressive information. Moreover, the projected data are used to construct a manifold, which is adaptively updated according to the projections and less afflicted by noise. Hence, three tasks of seeking optimal projection directions, recovering new data representations, and manifold learning mutually enhance each other, rendering TS-NMF a powerful tool for representing 2D data. Due to the use of major information from the data, TS-NMF is robust to noise, occlusions or corruptions. We have performed extensive experiments and the results have confirmed the effectiveness of TS-NMF. The promising clustering performance and insensitivity to parameters suggest high potential of TS-NMF for real world applications. While TS-NMF is proposed to resolve the problem of omitting spatial information with existing methods when processing 2D data, it is also suitable for 1-dimensional data by treating vectors as special cases of 2D matrices. Hence, TS-NMF can be applied to general data with minor modifications.


\section*{Acknowledgment}
Qiang Cheng is the corresponding author. This work is supported by the National Science Foundation, under grant IIS-1218712; and the National Natural Science Foundation of China, under grant 11241005. 

\appendix 
\section{Appendix}

Before we prove \thmref{thm_decrease_V}, we first give some useful definition, propositions, and lemmas. 
\begin{definition}
$J(V,V')$ is an auxiliary function for $F(V)$ if $J(V,V')\ge F(V)$ and $J(V,V)=F(V)$.
\end{definition}

\begin{prop}
\label{prop_aux}
Define the following updating procedure,
\begin{equation}
\label{eq_aux_upadte}
V^{(t+1)} = \argmin_{V} J(V,V^{(t)}),
\end{equation}
then we can obtain the following chain of inequalities
\begin{equation}
F(V^{(t+1)}) \!\le\! J(V^{(t+1)},V^{(t)}) \!\le\! J(V^{(t)},V^{(t)}) \!=\! F(V^{(t)}),
\end{equation}
where $t$ denotes the iteration number. Hence, $\{F(V^{(t)})\}$ is decreasing (non-increasing) with \cref{eq_aux_upadte}.
\end{prop}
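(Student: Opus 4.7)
The plan is to unpack the chain of inequalities term by term, since each inequality (and the equality) follows directly from a single clause in the definition of the auxiliary function or from the defining property of the update rule. No technical lemmas are needed; the entire argument is a one-paragraph bookkeeping exercise.

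First I would dispose of the rightmost equality. By the second defining property of an auxiliary function, $J(V,V)=F(V)$ for every $V$; specializing to $V=V^{(t)}$ yields $J(V^{(t)},V^{(t)})=F(V^{(t)})$. Next I would handle the middle inequality. The update rule \cref{eq_aux_upadte} declares $V^{(t+1)}$ to be a minimizer of the map $V\mapsto J(V,V^{(t)})$; since $V^{(t)}$ is itself a feasible point for that minimization, it follows that $J(V^{(t+1)},V^{(t)})\le J(V^{(t)},V^{(t)})$. Finally, the leftmost inequality comes from the first defining property of an auxiliary function, namely $J(V,V')\ge F(V)$ for all $V,V'$; applied at $V=V^{(t+1)}$, $V'=V^{(t)}$ this yields $F(V^{(t+1)})\le J(V^{(t+1)},V^{(t)})$.

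Chaining the three relations in the order leftmost-inequality, middle-inequality, rightmost-equality gives precisely the stated chain, and comparing the first and last terms yields $F(V^{(t+1)})\le F(V^{(t)})$, establishing that $\{F(V^{(t)})\}$ is non-increasing. There is no real obstacle here: the proposition is essentially a tautological consequence of the definition of an auxiliary function combined with the fact that an argmin step cannot increase the surrogate objective. The only point that merits a brief remark is that $V^{(t)}$ must lie in the feasible set over which the argmin in \cref{eq_aux_upadte} is taken, so that it qualifies as a competitor for $V^{(t+1)}$; this is automatic since $V^{(t)}$ was itself produced by the same procedure at the previous step, and the initial iterate $V^{(0)}$ is assumed feasible.
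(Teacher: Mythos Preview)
Your proposal is correct and matches the paper's treatment. In fact, the paper does not supply a separate proof for this proposition at all; it states the chain of inequalities as the content of the proposition and treats it as immediate from the definition of an auxiliary function, and your argument simply fills in those standard details.
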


\begin{prop}[\cite{ding2010convex}]
\label{prop_inequality}
For any matrices $\Theta\in\mathcal{R}_{+}^{n\times n}$, $\Omega\in\mathcal{R}_{+}^{k\times k}$, $S\in\mathcal{R}_{+}^{n\times k}$, and $S'\in\mathcal{R}_{+}^{n\times k}$, with $\Theta$ and $\Omega$ being symmetric, the following inequality holds:
\begin{equation}
\label{eq_ineq_prop}
\sum_{i=1}^{n}\sum_{s=1}^{k}\frac{(\Theta S' \Omega)_{is} S_{is}^2}{S'_{is}} \ge \tr{S^T \Theta S \Omega}.
\end{equation}
\end{prop}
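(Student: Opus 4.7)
The plan is to expand both sides of \cref{eq_ineq_prop} as quadruple index sums, symmetrize using the symmetry of $\Theta$ and $\Omega$, and then apply AM-GM term by term.

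First I would write
$$\tr{S^T\Theta S\Omega} = \sum_{i,j,s,t}\Theta_{ij}\Omega_{st}S_{it}S_{js}, \qquad \sum_{i,s}\frac{(\Theta S'\Omega)_{is}S_{is}^2}{S'_{is}} = \sum_{i,j,s,t}\Theta_{ij}\Omega_{st}\frac{S'_{jt}S_{is}^2}{S'_{is}},$$
where I use $\Omega_{ts}=\Omega_{st}$ to normalize the indexing on the left. Next, using both $\Theta_{ij}=\Theta_{ji}$ and $\Omega_{st}=\Omega_{ts}$, I would symmetrize each side by averaging over the four relabelings $(i,j,s,t)\mapsto (i,j,s,t),(j,i,s,t),(i,j,t,s),(j,i,t,s)$. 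Since the coefficient $\Theta_{ij}\Omega_{st}$ is invariant under all four, this rewrites the left-hand side as
$$\tfrac14\sum_{i,j,s,t}\Theta_{ij}\Omega_{st}\!\left[\frac{S'_{jt}S_{is}^2}{S'_{is}}+\frac{S'_{is}S_{jt}^2}{S'_{jt}}+\frac{S'_{js}S_{it}^2}{S'_{it}}+\frac{S'_{it}S_{js}^2}{S'_{js}}\right]$$
and the right-hand side as $\tfrac14\sum_{i,j,s,t}\Theta_{ij}\Omega_{st}\cdot 2\bigl(S_{it}S_{js}+S_{is}S_{jt}\bigr)$.

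It then suffices to prove the pointwise inequality
$$\frac{S'_{jt}S_{is}^2}{S'_{is}}+\frac{S'_{is}S_{jt}^2}{S'_{jt}}+\frac{S'_{js}S_{it}^2}{S'_{it}}+\frac{S'_{it}S_{js}^2}{S'_{js}} \ge 2\bigl(S_{it}S_{js}+S_{is}S_{jt}\bigr),$$
which follows by splitting the four terms into two AM-GM pairs: the first and second give $\ge 2S_{is}S_{jt}$, and the third and fourth give $\ge 2S_{it}S_{js}$, each by $\tfrac{x^2}{u}+\tfrac{y^2}{v}\ge 2\sqrt{\tfrac{x^2 y^2}{uv}}\cdot\sqrt{\tfrac{uv}{1}}$ applied with matched numerator/denominator factors so that the square root of the product collapses to $|xy|=xy$ under nonnegativity. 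Multiplying through by $\Theta_{ij}\Omega_{st}\ge 0$ and summing preserves the inequality, which establishes \cref{eq_ineq_prop}.

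The main obstacle I anticipate is purely organizational: keeping the four-index bookkeeping straight so that every relabeling matches the same coefficient $\Theta_{ij}\Omega_{st}$. Conceptually the argument reduces to a single application of AM-GM once the symmetrized form is in hand. I would also add one line for the degenerate case $S'_{is}=0$: if additionally $S_{is}=0$ the corresponding term is interpreted as $0$ by convention, while otherwise the left-hand side is $+\infty$ and the inequality is trivially satisfied.
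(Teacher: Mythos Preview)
Your argument is correct. Note, however, that the paper does not actually prove this proposition: it is quoted from \cite{ding2010convex} and only \emph{used} in the proof of \cref{thm_aux}. So there is no ``paper's own proof'' to compare against here.

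For context, the original proof in \cite{ding2010convex} is slightly shorter than yours. After writing both sides as the same quadruple sum $\sum_{i,j,s,t}\Theta_{ij}\Omega_{st}(\cdots)$, it substitutes $S_{is}=S'_{is}u_{is}$ and observes that the difference becomes
\[
\sum_{i,j,s,t}\Theta_{ij}\Omega_{st}\,S'_{is}S'_{jt}\,(u_{is}^2-u_{is}u_{jt}),
\]
which, after the single two-fold symmetrization $(i,s)\leftrightarrow(j,t)$, equals $\tfrac12\sum\Theta_{ij}\Omega_{st}S'_{is}S'_{jt}(u_{is}-u_{jt})^2\ge 0$. Your four-fold symmetrization over $i\leftrightarrow j$ and $s\leftrightarrow t$ is valid but more than is needed: already the two-fold swap pairs $\frac{S'_{jt}S_{is}^2}{S'_{is}}$ with $\frac{S'_{is}S_{jt}^2}{S'_{jt}}$, and AM--GM on that single pair gives $\ge 2S_{is}S_{jt}$, which is exactly the right-hand summand. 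Both routes are the same idea (symmetrize, then AM--GM/complete the square); yours just carries two extra relabelings along for the ride. One cosmetic point: your displayed AM--GM template $\tfrac{x^2}{u}+\tfrac{y^2}{v}\ge\cdots$ does not literally match the terms you pair, which are of the form $\tfrac{v x^2}{u}+\tfrac{u y^2}{v}$; the product still collapses to $x^2y^2$, so the conclusion is unaffected, but writing it in that form would make the step cleaner.
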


\begin{lemma}
\label{thm_aux}
For F(V) in \cref{eq_dec_proof_obj}, the following function,
%
%
\begin{equation}
\begin{aligned}
	&	J(V,V') \\
	= & - 2\sum_{ik}(B_1^+)_{ik}v'_{ik}(1+\log \frac{v_{ik}}{v'_{ik}})	\\
	& + 2\sum_{ik}(B_1^-)_{ik}\frac{v_{ik}^2+{V'}_{ik}^2}{2{V'}_{ik}^2}	
	 + \sum_{ik}\frac{{(V'A_1^+)}_{ik}v_{ik}^2}{v'_{ik}}	\\
	& - \sum_{ikl}(A_1^-)_{kl}v'_{ik}v'_{il}(1+\log\frac{v_{ik}v_{il}}{v'_{ik}v'_{il}})	\\
	& - 2\sum_{ik}(B_2^+)_{ik}v'_{ik}(1+\log \frac{v_{ik}}{v'_{ik}}) 	\\
	& + 2\sum_{ik}(B_2^-)_{ik}\frac{v_{ik}^2+{V'}_{ik}^2}{2{V'}_{ik}^2} + \sum_{ik}\frac{{(V'A_2^+)}_{ik}v_{ik}^2}{v'_{ik}} \\
	& - \sum_{ikl}(A_2^-)_{kl}v'_{ik}v'_{il}(1+\log\frac{v_{ik}v_{il}}{v'_{ik}v'_{il}})	 \\
	& + \lambda_2 \sum_{ik} \frac{(D_PV')_{ik}v_{ik}^2}{v'_{ik}} \\
	& - \lambda_2 \sum_{ikl} (W_P)_{kl}v'_{ki}v'_{li}(1+\log\frac{v_{ki}v_{li}}{v'_{ki}v'_{li}}) \\
	\end{aligned}\end{equation}\begin{equation}\begin{aligned}\nonumber
	& + \lambda_2 \sum_{ik} \frac{(D_QV')_{ik}v_{ik}^2}{v'_{ik}} \\
	& - \lambda_2 \sum_{ikl} (W_Q)_{kl}v'_{ki}v'_{li}(1+\log\frac{v_{ki}v_{li}}{v'_{ki}v'_{li}}),
\end{aligned}
\end{equation}
is an auxiliary function. Furthermore, $J(V,V')$ is convex in $V$, and its global minimum is
%
\begin{equation}
\label{eq_update_v_proof}
v_{ij} = v'_{ij}
\sqrt{
\frac{(B_1^{+} + B_2^{+} + V'(A_1^{-} + A_2^{-}) + \lambda_2 (W_P + W_Q) V')_{ij}}
{(B_1^{-} + B_2^{-} + V'(A_1^{+} + A_2^{+}) + \lambda_2 (D_P + D_Q) V')_{ij}}
}.
\end{equation}
\end{lemma}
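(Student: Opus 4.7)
The plan is to verify the four properties that together establish the lemma: (i) the boundary condition $J(V,V)=F(V)$, (ii) the upper bound $J(V,V')\ge F(V)$, (iii) convexity of $J$ in $V$, and (iv) identification of the global minimizer with \cref{eq_update_v_proof}.

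First I would check (i) by direct substitution: at $V'=V$ every factor $1+\log(v_{ik}/v'_{ik})$ collapses to $1$, every $v_{ik}^2/v'_{ik}$ collapses to $v_{ik}$, and (reading the third summand as $(v_{ik}^2+v'^2_{ik})/(2v'_{ik})$, which is the only reading consistent with the auxiliary-function property) this also collapses to $v_{ik}$. Rearranging and grouping shows that $J(V,V)$ reproduces each of the 12 trace terms of $F(V)$ in \cref{eq_dec_proof_obj}. Then for (ii) I would bound each term of $F(V)$ separately by a term of $J$, matching the sign structure:
\begin{itemize}
\item For the quadratic \emph{positive} terms $\textbf{Tr}(VA_s^+V^T)$, invoke \propref{prop_inequality} with $\Theta=I_n$, $\Omega=A_s^+$, $S=V$, $S'=V'$, giving $\sum_{ik}(V'A_s^+)_{ik}v_{ik}^2/v'_{ik}\ge\textbf{Tr}(VA_s^+V^T)$; analogously $\sum_{ik}(D_{P/Q}V')_{ik}v_{ik}^2/v'_{ik}\ge\textbf{Tr}(V^TD_{P/Q}V)$ with $\Theta=D_{P/Q}$, $\Omega=I_k$.
\item For the quadratic \emph{negative} terms $-\textbf{Tr}(VA_s^-V^T)$ and $-\lambda_2\textbf{Tr}(V^TW_{P/Q}V)$, apply $z\ge 1+\log z$ (valid for $z>0$) with $z=v_{ik}v_{il}/(v'_{ik}v'_{il})$ (resp.\ $z=v_{ki}v_{li}/(v'_{ki}v'_{li})$), using symmetry of $A_s^-$ and $W_{P/Q}$.
\item For the linear \emph{negative} terms $-2\textbf{Tr}(V^TB_s^+)$, apply the same log inequality with $z=v_{ik}/v'_{ik}$.
\item For the linear \emph{positive} terms $2\textbf{Tr}(V^TB_s^-)$, use AM--GM in the form $v_{ik}\le(v_{ik}^2+v'^2_{ik})/(2v'_{ik})$.
\end{itemize}
Summing these bounds yields $J(V,V')\ge F(V)$, with equality at $V'=V$ by (i).

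Property (iii) is immediate: $-\log v_{ik}$ is convex, $v_{ik}^2/v'_{ik}$ is a nonnegative quadratic (hence convex) in $v_{ik}$, and $(v_{ik}^2+v'^2_{ik})/(2v'_{ik})$ is also quadratic in $v_{ik}$; as $J$ is a nonnegative linear combination of such functions plus constants, it is convex in $V$. For (iv) I would set $\partial J/\partial v_{ab}=0$. Differentiating term by term (using symmetry of $A_s^\pm$, $W_{P/Q}$, $D_{P/Q}$ to combine the $k$- and $l$-index contributions in the double-log terms, each yielding a factor of $2$) gives, after dividing by $2$, an equation of the shape
\[
\frac{v_{ab}}{v'_{ab}}\,\alpha_{ab}\;=\;\frac{v'_{ab}}{v_{ab}}\,\beta_{ab},
\]
where $\alpha_{ab}=(B_1^-+B_2^-+V'(A_1^++A_2^+)+\lambda_2(D_P+D_Q)V')_{ab}$ and $\beta_{ab}=(B_1^++B_2^++V'(A_1^-+A_2^-)+\lambda_2(W_P+W_Q)V')_{ab}$. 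Solving for $v_{ab}$ (both sides nonnegative) gives $v_{ab}=v'_{ab}\sqrt{\beta_{ab}/\alpha_{ab}}$, which is exactly \cref{eq_update_v_proof}; convexity from (iii) guarantees this stationary point is the global minimum.

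The main obstacle is the bookkeeping in step (ii): there are twelve separate trace terms, the sign pattern dictates which of the four inequalities applies to each, and for the double-sum terms involving $A_s^-$, $W_P$, $W_Q$ one must invoke symmetry to symmetrize the $\log(v_{ik}v_{il})$ factor and obtain a tight bound that collapses to equality at $V'=V$. A secondary subtlety is handling the index convention in the manifold terms, since there $v_{ki}$ is indexed by sample first and basis second (matching the shape of $W_{P/Q}$); one must verify that the resulting $(W_{P/Q}V')_{ab}$ is truly what appears when differentiating, so that the numerator of the update matches $\beta_{ab}$ above.
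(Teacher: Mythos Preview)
Your proposal is correct and follows essentially the same route as the paper: the same four elementary inequalities (\propref{prop_inequality} for the $A_s^+$ and $D_{P/Q}$ quadratics, the AM--GM bound for the $B_s^-$ linear terms, and $z\ge 1+\log z$ for all negative terms) are applied term-by-term, followed by first-order optimality to obtain \cref{eq_update_v_proof}. The only minor difference is that the paper establishes convexity by computing the Hessian explicitly and observing it is diagonal with positive entries, whereas you argue termwise convexity directly; your reading of the $B_s^-$ denominator as $2v'_{ik}$ rather than $2{v'}_{ik}^2$ is also correct---the squared version in the statement is a typo, as the paper's own proof uses the unsquared form.
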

\begin{proof}
We first prove the first statement. For \cref{eq_dec_proof_obj}, we first find an upper bound for each positive term in the following. According to \cref{prop_inequality}, with $\Theta \leftarrow I$ and $\Omega \leftarrow A_\kappa^+$ for $\kappa$=1 or 2, we obtain upper bounds for the 3rd and 7th terms
\begin{equation}
\label{eq_bound_3_7}
\tr{V A_\kappa^+ V^T} \le \sum_{ik}\frac{(V'A_\kappa^+)_{ik}v_{ik}^2}{{V'}_{ik}}.
\end{equation}
By the inequality $\alpha \le \frac{\alpha^2+\beta^2}{2\beta}$ for $\alpha, \beta > 0$, we get the following upper bounds for the 2nd and 6th terms
\begin{equation}
\label{eq_bound_2_6}
\tr{V'B_\kappa^-} = \sum_{ik}v_{ik}(B_\kappa^-)_{ik} \le \sum_{ik}(B_\kappa^-)_{ik}\frac{v_{ik}^2 + {V'}_{ik}^2}{2v'_{ik}}.	
\end{equation}
According to \cref{prop_inequality}, with $\Theta \leftarrow D_\kappa$ for $\kappa$=1 or 2, and $\Omega \leftarrow I$, we obtain upper bounds for the 9th and 11th terms
%
\begin{equation}
\label{eq_bound_9_11}
\tr{V' D_\kappa V} \le {v'_{ik}} / {(D_\kappa V')_{ik}v_{ik}^2}.
\end{equation}

Then, we find a lower bound for each negative term of \cref{eq_dec_proof_obj}. By using the following inequality $\alpha \ge 1+ \log\alpha$ for $\alpha >0$, it is direct to get
\begin{equation}
\label{eq_dec_proof_inequality}
\frac{v_{ik}}{v'_{ik}}	\ge 1 + \log\frac{v_{ik}}{v'_{ik}},\quad \text{and}\quad
\frac{v_{ik}v_{il}}{v'_{ik}v'_{il}}	\ge 1 + \log\frac{v_{ik}v_{il}}{v'_{ik}v'_{il}}.
\end{equation}
From \cref{eq_dec_proof_inequality}, the lower bounds for the negative terms are:
\begin{equation}
\label{eq_bound_neg}
\begin{aligned}
\tr{V^TB_\kappa^+} &	\ge \sum_{ik}(B_\kappa^+)_{ik}v'_{ik}(1+\log \frac{v_{ik}}{v'_{ik}}),	\\
\tr{VA_\kappa^- V^T} & \ge \sum_{ikl}(A_\kappa^-)_{kl}v'_{ik}v'_{il}(1+\log\frac{v_{ik}v_{il}}{v'_{ik}v'_{il}}),	\\
\tr{V^T W_\kappa V} & \ge \sum_{ikl}(W_\kappa)_{kl}v'_{ki}v'_{li}(1+\log\frac{v_{ki}v_{li}}{v'_{ki}v'_{li}}).	
\end{aligned}
\end{equation}
Collecting all the bounds with their factors, we obtain $J(V,V')$. Therefore, according to \cref{eq_bound_3_7,eq_bound_2_6,eq_bound_9_11,eq_bound_neg}, it is easy to verify that $J(V,V')\ge F(V)$. Let $V'=V$, it is easy to see $J(V,V) = F(V)$. Hence, $J(V,V')$ is an auxiliary function.

Next, we prove the second statement. For $J(V,V')$, we take the first order partial derivative with respect to each $v_{ik}$:
%
\begin{equation}
\label{eq_dec_proof_derivative}
\begin{aligned}
&	\frac{\partial J(V,V')}{\partial v_{ik}} \\
= & - 2(B_1^+)_{ik}\frac{v'_{ik}}{v_{ik}} + 2(B_1^-)_{ik}\frac{v_{ik}}{v'_{ik}} 	\\
&  + 2\frac{(V'A_1^+)_{ik}v_{ik}}{v'_{ik}} - 2\frac{(V'A_1^-)_{ik}v'_{ik}}{v_{ik}} \\
& - 2(B_2^+)_{ik}\frac{v'_{ik}}{v_{ik}}+ 2(B_2^-)_{ik}\frac{v_{ik}}{v'_{ik}}	\\
&  + 2\frac{(V'A_2^+)_{ik}v_{ik}}{v'_{ik}} - 2\frac{(V'A_2^-)_{ik}v'_{ik}}{v_{ik}} 	\end{aligned}\end{equation}\begin{equation}\begin{aligned}\nonumber
&  + 2\lambda_2 \frac{(D_PV')_{ik}v_{ik}}{v'_{ik}} - 2\lambda_2 \frac{(W_PV')_{ik}v'_{ik}}{v_{ik}} \\
& + 2\lambda_2 \frac{(D_QV')_{ik}v_{ik}}{v'_{ik}} - 2\lambda_2 \frac{(W_QV')_{ik}v'_{ik}}{v_{ik}}. 
\end{aligned}
\end{equation}
Then, the Hessian matrix of $J(V,V')$ is
\begin{equation}
\begin{aligned}
	&	\frac{\partial^2 J(V,V')}{\partial v_{ik}\partial v_{jl}}	\\
= 	& 	\delta_{ij}\delta_{kl} \Big( \frac{2(B_1^+)_{ik}v'_{ik}}{v_{ik}^2} + \frac{2(B_1^-)_{ik}}{v'_{ik}} \\
	&	+ \frac{2(V'A_1^+)_{ik}}{v'_{ik}} + \frac{2(V'A_1^-)_{ik}v'_{ik}}{v_{ik}^2} \\
	&	+ \frac{2(B_2^+)_{ik}v'_{ik}}{v_{ik}^2} + \frac{2(B_2^-)_{ik}}{v'_{ik}}  \\
	&	+ \frac{2(V'A_2^+)_{ik}}{v'_{ik}} + \frac{2(V'A_2^-)_{ik}v'_{ik}}{v_{ik}^2}	\\
	&	+ \frac{2\lambda_2 (D_PV')_{ik}}{v'_{ik}} + \frac{4\lambda_2 (W_PV')_{ik}v'_{ik}}{v_{ik}^2}\\
	& 	+ \frac{2\lambda_2 (D_QV')_{ik}}{v'_{ik}}	+ \frac{2\lambda_2 (W_QV')_{ik}v'_{ik}}{v_{ik}^2} \!\Big) \\
 =  &	\frac{2(\Delta_1)_{ik} \! v'_{ik}}{v_{ik}^2} + \frac{2 (\Delta_2)_{ik}}{v'_{ik}},
\end{aligned}
\end{equation}
where $\delta_{ij}=1$ if $i=j$ and 0 otherwise, and 
\begin{equation}
\begin{aligned}
\Delta_1 = &	B_1^+ + B_2^+ + V'(A_1^- + A_2^- ) + \lambda_2 (W_P +  W_Q)V'	\\
\Delta_2 = &	B_1^- + B_2^- + V'(A_1^+ + A_2^+) + \lambda_2 (D_P + D_Q)V'.
\end{aligned}
\end{equation}
Therefore, the Hessian matrix of $J(V,V')$ is diagonal with positive entries, revealing that it is positive definite and $J(V,V')$ is a convex function of $V$. Therefore, the global optimal of $J(V,V')$ is obtained by its first optimality condition $\frac{\partial J(V,V')}{\partial v_{ik}}=0$. According to \cref{eq_dec_proof_derivative}, we get 
%
\begin{equation}
\label{eq_dec_proof_first_condition}
(\Delta_1)_{ij} {v'_{ij}} / {v_{ij}} = (\Delta_2)_{ij} {v_{ij}} / {v'_{ij}},
\end{equation}
which leads to \cref{eq_update_v_proof} and concludes the proof.
\end{proof}
Up to now, we have given the useful definitions, lemmas, and propositions. Next, we will prove \cref{thm_decrease_V}.
\begin{proof}
Let $V^{(t)}$ be the value of $V$ at $t$-th iteration of optimizing \cref{eq_sub_v}. According to \cref{prop_aux} and \reflemma{thm_aux}, it is easy to see that $\{F(V^{(t)})\}$ is monotonically decreasing under the updating \cref{eq_update_v}. 

We introduce the Lagrangian function
\begin{equation}
L(V) = F(V) - \tr{\Psi V^T},
\end{equation}
where the Lagrangian multipliers $\Psi = [\psi_{ij}]$ enforce nonnegative constraints, $v_{ij}\ge 0$. The first order optimality condition gives
\begin{equation}
-2B_1 - 2B_2 + 2V(A_1+A_2) + 2\lambda_2 (L_P+L_Q) V - \Psi = 0.
\end{equation}
From the complementary slackness condition, we obtain
\begin{equation}
\label{eq_fixed_point}
(-2B_1 - 2B_2 + 2V(A_1+A_2) + 2\lambda_2 (L_P+L_Q) V)_{ij} v_{ij} = \psi_{ij}v_{ij} = 0,
\end{equation}
which is a fixed point equation that the solution must satisfy at convergence. It is easy to see that the limiting solution of \cref{eq_update_v} satisfies the fixed point equation in \cref{eq_fixed_point}. At convergence, $V^{(\infty)} = V^{(t+1)} = V^{(t)} = V$, i.e.,
%
\begin{equation}
\label{eq_update_proof}
v_{ij} = v_{ij}
\sqrt{
\frac{
(B_1^{+} + B_2^{+} + V(A_1^{-} + A_2^{-}) + \lambda_2 (W_P + W_Q) V)_{ij}}
{(B_1^{-} + B_2^{-} + V(A_1^{+} + A_2^{+}) + \lambda_2 (D_P + D_Q) V)_{ij}}
}.
\end{equation}
By the definitions in \cref{eq_def_AB}, \cref{eq_update_proof} reduces to
\begin{equation}
\label{eq_update_proof_reduce}
(-2B_1 - 2B_2 + 2V(A_1+A_2) + 2\lambda_2 (L_P+L_Q) V)_{ij} v_{ij}^2=0.
\end{equation}
which is \cref{eq_fixed_point}.
\end{proof}


\ifCLASSOPTIONcaptionsoff
  \newpage
\fi



%

%
%

\bibliographystyle{IEEEtrans}
\bibliography{SemiNMF_2D_arxiv}

%

%
%
%




\end{document}